\documentclass{article}

% if you need to pass options to natbib, use, e.g.:
%     \PassOptionsToPackage{numbers, compress}{natbib}
% before loading neurips_2024
\PassOptionsToPackage{numbers, compress}{natbib}

% ready for submission
% \usepackage[preprint]{neurips_2024}

% to compile a preprint version, e.g., for submission to arXiv, add add the
% [preprint] option:
%     \usepackage[preprint]{neurips_2024}

% to compile a camera-ready version, add the [final] option, e.g.:
\usepackage[final]{neurips_2025}

% to avoid loading the natbib package, add option nonatbib:
%    \usepackage[nonatbib]{neurips_2024}

\usepackage[utf8]{inputenc} % allow utf-8 input
\usepackage[T1]{fontenc}    % use 8-bit T1 fonts
\usepackage{hyperref}       % hyperlinks
\usepackage{url}            % simple URL typesetting
\usepackage{booktabs}       % professional-quality tables
\usepackage{amsfonts}       % blackboard math symbols
\usepackage{nicefrac}       % compact symbols for 1/2, etc.
\usepackage{microtype}      % microtypography
\usepackage{xcolor}         % colors
\usepackage{multirow}
\usepackage{makecell}
\usepackage{listings}
\usepackage{algorithm}
\usepackage{algpseudocode}
\usepackage{wrapfig}
\usepackage{caption}

% For theorems and such
\usepackage{amsmath}
\usepackage{amssymb}
\usepackage{mathtools}
\usepackage{amsthm}
\usepackage{enumitem}

\usepackage[textsize=tiny]{todonotes}

%%%%%%%%%%%%%%%%%%%%%%%%%%%%%%%%
% THEOREMS
%%%%%%%%%%%%%%%%%%%%%%%%%%%%%%%%
\theoremstyle{plain}
\newtheorem*{theorem*}{Theorem}
\newtheorem{theorem}{Theorem}[section]
\newtheorem{proposition}{Proposition}[section]
\newtheorem{lemma}{Lemma}[section]

\theoremstyle{definition}

\theoremstyle{remark}

\DeclareMathOperator{\Tr}{Tr}
\DeclareMathOperator{\Exp}{Exp}
\DeclareMathOperator{\Log}{Log}
\newcommand*\diff{\mathop{}\!\mathrm{d}}

%%%%% personal: xiangxin zhou
%%%% for format
\definecolor{darkblue}{rgb}{0, 0, 0.5}
\hypersetup{colorlinks=true, citecolor=darkblue, linkcolor=darkblue, urlcolor=darkblue}

\lstset{
basicstyle=\ttfamily\tiny,
columns=fullflexible,
}

\title{Riemannian Consistency Model}

% The \author macro works with any number of authors. There are two commands
% used to separate the names and addresses of multiple authors: \And and \AND.
%
% Using \And between authors leaves it to LaTeX to determine where to break the
% lines. Using \AND forces a line break at that point. So, if LaTeX puts 3 of 4
% authors names on the first line, and the last on the second line, try using
% \AND instead of \And before the third author name.

\author{%
  Chaoran Cheng$^{*1}$, 
  Yusong Wang$^{*2}$, 
  Yuxin Chen$^{1}$, 
  Xiangxin Zhou$^{3}$, 
  Nanning Zheng$^{2}$, 
  Ge Liu$^{1}$\\
  $^{1}$University of Illinois Urbana-Champaign,\;\\%\texttt{\{chaoran7,nealchen,geliu\}@illinois.edu}\\
  $^{2}$Xi’an Jiaotong University,\;%\texttt{wangyusong2000@stu.xjtu.edu.cn,nnzheng@mail.xjtu.edu.cn}\\
  $^{3}$University of Chinese Academy of Sciences%,\;\texttt{zhouxiangxin1998@gmail.com}\\
  % examples of more authors
  % \And
  % Coauthor \\
  % Affiliation \\
  % Address \\
  % \texttt{email} \\
}

\begin{document}

\maketitle
{
\renewcommand{\thefootnote}{\fnsymbol{footnote}}
\footnotetext[1]{Equal contribution.}
\footnotetext[2]{Corresponding author to \texttt{chaoran7@illinois.edu}.}
}

%%%%%%%%% ABSTRACT
\begin{abstract}
Consistency models are a class of generative models that enable few-step generation for diffusion and flow matching models. While consistency models have achieved promising results on Euclidean domains like images, their applications to Riemannian manifolds remain challenging due to the curved geometry. In this work, we propose the Riemannian Consistency Model (RCM), which, for the first time, enables few-step consistency modeling while respecting the intrinsic manifold constraint imposed by the Riemannian geometry. Leveraging the covariant derivative and exponential-map-based parameterization, we derive the closed-form solutions for both discrete- and continuous-time training objectives for RCM. We then demonstrate theoretical equivalence between the two variants of RCM: Riemannian consistency distillation (RCD) that relies on a teacher model to approximate the marginal vector field, and Riemannian consistency training (RCT) that utilizes the conditional vector field for training. We further propose a simplified training objective that eliminates the need for the complicated differential calculation. Finally, we provide a unique kinematics perspective for interpreting the RCM objective, offering new theoretical angles. Through extensive experiments, we manifest the superior generative quality of RCM in few-step generation on various non-Euclidean manifolds, including flat-tori, spheres, and the 3D rotation group SO(3), spanning a variety of crucial real-world applications such as RNA and protein generation. Our code is available at \url{https://github.com/ccr-cheng/riemannian-consistency-model}.
\end{abstract}

%%%%%%%%% BODY TEXT
\section{Introduction}
Diffusion \citep{song2019generative,song2020improved,ho2020denoising} and flow matching \citep{lipman2022flow,chen2023riemannian} models have achieved remarkable success on generative modeling in various domains including image generation \citep{rombach2022high,esser2024scaling}, protein design \citep{yim2023fast,bose2023se}, and text generation \citep{austin2021structured,gat2024discrete,cheng2024categorical}. 
As an intrinsically iterative process that gradually transforms the data from random noises into meaningful samples, the inference procedure of diffusion and flow matching usually requires hundreds to up to a thousand steps for decent generation. To mitigate such high computational cost, a new family of generative models known as the Consistency Model (CM) \citep{song2023consistency} was proposed. By ``shortcutting'' the probability flow and enforcing consistent model outputs, consistency models are able to generate high-quality samples using one or a few steps. In the image generation task, consistency models have surpassed the existing distillation approaches~\citep{salimans2022progressive} and rectified flows \citep{liu2023flow,liu2022rectified} in one-step generation.

Besides Euclidean domains like images, generative models on various Riemannian manifolds have a potentially profound impact in scientific domains, including protein generation~\citep{yim2023fast,bose2023se}, peptide design~\citep{li2024full,kong2024full}, robotics~\citep{braun2024riemannian}, and geoscience~\citep{price2025probabilistic}.
For example, the generative modeling of a protein requires descriptors of its position, orientation, and torsion angles for each amino acid. While the position is Euclidean, the orientation lies in the 3D rotation group $\mathrm{SO}(3)$ and the torsion angle lies in the flat torus where a translation by $2\pi$ will result in the same angle.
Existing works for protein design typical require 200 to 1000 sampling steps, limiting the overall throughput for virtual screening. A fast and effective few-step generative model on Riemannian manifolds, therefore, can significantly accelerate the drug discovery and enzyme design processes, further facilitating crucial real-world pharmaceutical applications.

While the current Euclidean consistency model has achieved superior performance on images where the distance between two predictions is measured with the standard Euclidean norm, its counterpart in Riemannian manifolds, the \textbf{Riemannian consistency model (RCM)}, poses additional challenges and remains largely unexplored. 
Specifically, the intrinsically curved manifold requires the consistency parameterizations to lie on the manifold. In this way, simple linear interpolation like \citet{lu2024simplifying,yang2024consistency} is not always feasible. 
% Furthermore, as adjacent tangent spaces on the manifold are not equivalent, differentiation of the vector field requires additional correction due to the curvature. 
Furthermore, such a manifold constraint will impose an additional constraint on the vector field when the consistency loss is enforced, that is, the vector field at different points should also lie in their corresponding tangent spaces, necessitating additional corrections in the time derivative.

In this work, we address the challenge of Riemannian Consistency Modeling with a novel consistency parameterization based on the exponential map to ensure the manifold constraint and use the \emph{covariant derivative} to account for the intrinsic curved geometry when calculating the time derivative. We provide the closed-form solutions for both discrete- and continuous-time formulations of the RCM objective. Similar to the Euclidean CM, we theoretically prove that the Riemannian consistency distillation (RCD), which relies on a teacher model to approximate the marginal vector field, can be extended to Riemannian consistency training (RCT), which directly utilizes the conditional vector field with marginalization techniques for training. 
We further propose a simplified training objective that empirically improves model performance and eliminates the need for potentially complicated calculations of the differentials of the exponential map. Intriguingly, we provide an intuitive interpretation of the RCM objective from the perspective of kinematics on curved geometries, offering new theoretical angles.
We carry out extensive experiments on non-Euclidean manifolds, including flat tori, spheres, and the 3D rotation group $\mathrm{SO}(3)$. Compared to the vanilla flow-matching and the naive Euclidean adaptation of the consistency model, our RCM recipe demonstrates higher-quality generations with better distributional fitness in the few-step generation setting.

\section{Preliminary}
\subsection{Riemannian Flow Matching}
Conditional flow matching (CFM) \citep{lipman2022flow} learns a time-dependent vector field that pushes the prior noise distribution to any target data distribution. Such a flow-based model can be viewed as the continuous generalization of the score matching (diffusion) model \citep{song2019generative,song2020improved,ho2020denoising} while allowing for a more flexible design of the denoising process. Riemannian flow matching (RFM) \citep{chen2023riemannian} further extends CFM to general manifolds on which a well-defined distance metric can be computed.

Mathematically, consider a smooth Riemannian manifold $\mathcal{M}$ with the Riemannian metric $g$, a \emph{probability path} $p_t:[0,1]\to\mathcal{P(M)}$ is a curve of probability densities over $\mathcal{M}$. A \emph{flow} $\psi_t:[0,1]\times \mathcal{M}\to\mathcal{M}$ is a time-dependent diffeomorphism defined by a time-dependent vector field $u_t:[0,1]\times \mathcal{M}\to T\mathcal{M}$ via the probability flow ordinary differential equation (PF-ODE): $\frac{\diff}{\diff t}\psi_t(x)=u_t(\psi_t(x))$. The flow matching objective directly regresses the conditional vector field $u_t(x_t|x_0,x_1):=\frac{\mathrm{d}}{\mathrm{d}t}x_t$ with a time-dependent neural net $v_\theta(x_t,t)$ where $x_t:=\psi_t(x)$. 
% However, this objective is generally intractable. Both \citep{lipman2022flow,chen2023riemannian} demonstrated that a tractable objective can be derived by conditioning on the target data $x_1$ at $t=1$ of the probability path. 
The Riemannian flow matching objective can be formulated as:
\begin{equation}
    \mathcal{L}_\text{RFM}=\mathbb{E}_{t\sim U[0,1],x_0\sim p_0(x),x_1\sim q(x)}\left[\|v_\theta(x_t,t)-u_t(x_t|x_0,x_1)\|_g^2\right] ,\label{eqn:rfm_loss}
\end{equation}
where $q$ is the data distribution, $p_0$ is the prior distribution, and $x_t:=\psi_t(x|x_0,x_1)$ denotes the conditional flow. \citet{chen2023riemannian} further demonstrated that if the exponential map and logarithm map can be evaluated in closed-form, the condition flow can be defined as the geodesic interpolation $x_t=\exp_{x_1}(\kappa_t\log_{x_1}x_0)$, where $\kappa_t$ is a monotonically decreasing schedule satisfying $\kappa_0=1,\kappa_1=0$. In this way, the corresponding vector field can be calculated as $u_t(x_t|x_0,x_1)=\dot\kappa_t\log_{x_t} x_1/\kappa_t$. 
In this work, we use $\dot x,\dot v$, etc, to denote the time derivative of $x,v$ and follow the Einstein summation notation. If necessary, the time $t$ will be noted in the additional subscript.

\subsection{Consistency Model}
The marginal vector field learned by the (Euclidean) flow matching model is not necessarily straight, which requires solving the PF-ODE with hundreds of iterative steps. The consistency model (CM)~\citep{song2023consistency}, by design, can generate high-quality samples by directly mapping noise to data. CM achieves one-few generation by ``short-cutting'' the PF-ODE such that the denoiser $f_\theta(x_t,t)$ along the PF-ODE should output consistency predictions:
\begin{equation}
    \mathcal{L}^N_\text{CM}=N^2\mathbb{E}_{t,x_t}\left[w_t\|f_\theta(x_t,t)-f_{\theta^-}(x_{t+\Delta t},t+\Delta t)\|_2^2\right],
\end{equation}
where $\Delta t=1/N$ is the discretization step, $\theta^-$ is the stop-gradient operation, and $w_t$ is a weighing function. $x_{t+\Delta t}$ is defined by following the marginal vector field at $x_t$ along the PF-ODE.
To avoid trivial solutions, an additional consistency constraint $f_\theta(x_1,1)=x_1$ needs to be enforced. \citet{song2023consistency} utilized a parameterization of $f_\theta(x_t,t):=c_\text{in}(t)x_t+c_\text{out}(t)F_\theta(x_t,t)$ where $F_\theta(x_t,t)$ is the unconstrained denoiser and $c_\text{in}(t),c_\text{out}(t)$ are schedulers such that $c_\text{in}(1)=1,c_\text{out}(1)=0$. 
\citet{yang2024consistency,lu2024simplifying} further related the denoiser parameterization with the vector field (score) parameterization.
In practice, a pre-trained flow matching model is used to approximate the marginal vector field (consistency distillation, CD). Additionally, \citet{song2023consistency} also demonstrated that, with the stop-gradient operation, training on the conditional vector field (consistency training, CT) has the same marginalization effect.
Existing CMs have achieved high-quality one-step or few-step generations on Euclidean domains like image generation. Their Riemannian counterpart, however, remains unexplored.

\section{Riemannian Consistency Model}
\subsection{Consistency Model on Riemannian Manifolds}

\begin{wrapfigure}{r}{0.45\textwidth} 
\centering
\vspace{-1em}
\includegraphics[width=\linewidth]{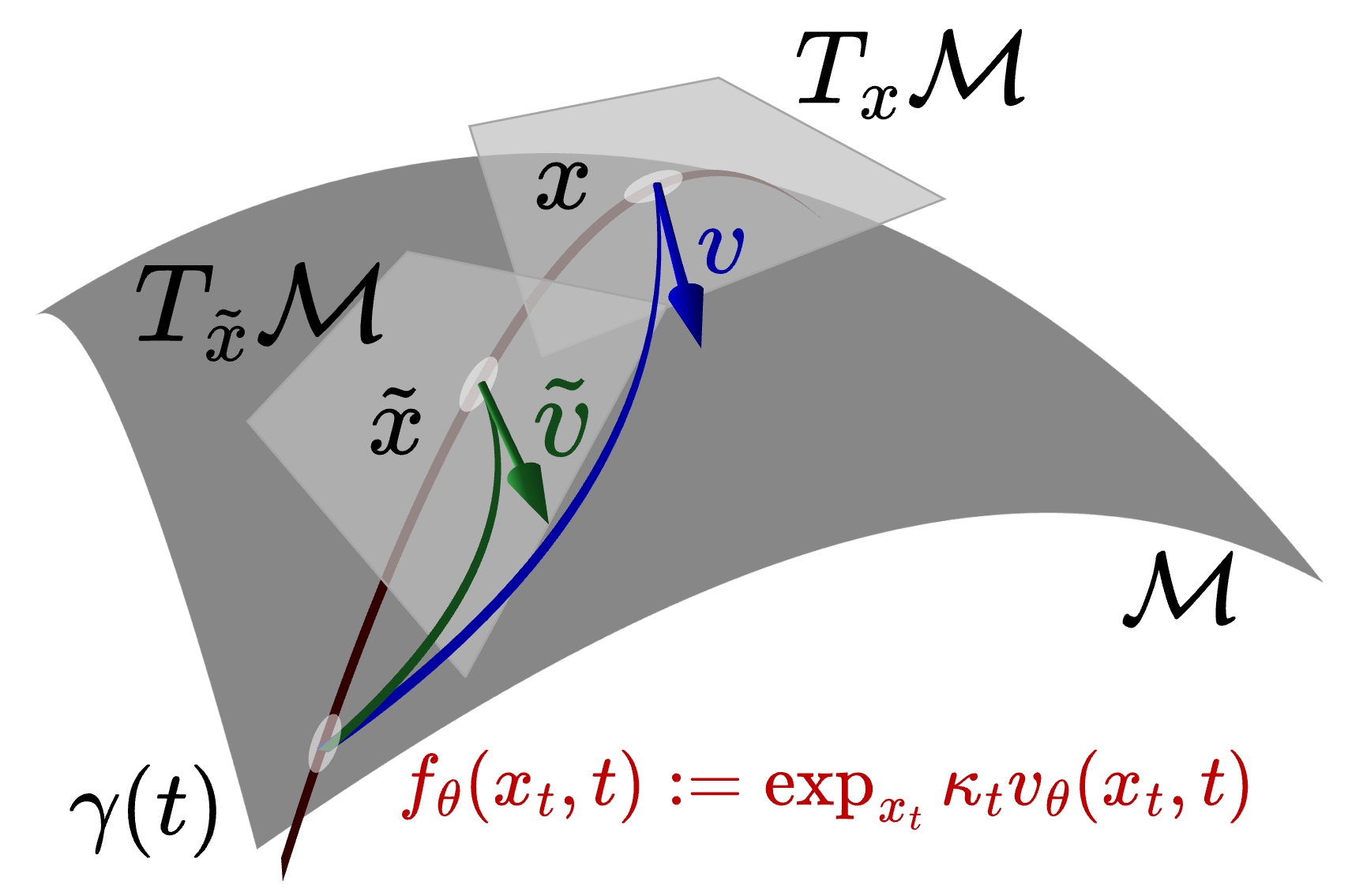}
\caption{Riemannian Consistency Model (RCM). The denoiser $f_\theta$ along the Riemannian PF-ODE should be consistent.}
\label{fig:rcm}
% \vspace{-1em}
\end{wrapfigure} 
As discussed above, the extension of the consistency model to Riemannian manifolds poses additional challenges, as the model needs to be fully aware of the intrinsic geometry to shortcut the Riemannian PF-ODE. 
% As a simple example, the linear interpolation adopted by the original consistency model does not work for general Riemannian manifolds, as it may step outside the manifold. Furthermore, such a manifold constraint will impose an additional constraint on the vector field when the consistency loss is enforced, as the vector field at different points should lie in the corresponding tangent spaces.
To address the above challenges, we choose to learn the vector field $v_\theta(x_t,t)$ and adopt the consistency parameterization utilizing the exponential map to ensure the manifold geometry (see Figure~\ref{fig:rcm}):
\begin{equation}
    f_\theta(x_t,t):=\exp_{x_t}\kappa_tv_\theta(x_t,t).\label{eqn:denoise}
\end{equation}
It is easy to verify that for any bounded vector field $v_\theta(x_t,t)$, the above parameterization satisfies the consistency constraint $f_\theta(x_1,1)=x_1$ as $\kappa_1=0$.

The geometric property of Riemannian manifolds allows us to define \emph{geodesics} as the ``straight lines'' on the manifold, whose lengths are \emph{geodesic distance}, the shortest distance between two points on the manifold. Therefore, following the core idea that the predictions along the same PF-ODE should be consistent, we use the geodesic distance to measure the consistency:
\begin{equation}
\mathcal{L}^N_\text{RCM}=N^2\mathbb{E}_{t,x_t}\left[w_t d_g^2\left(f_\theta(x_t,t),f_{\theta^-}(x_{t+\Delta t},t+\Delta t)\right)\right].\label{eqn:discrete-rcm}
\end{equation}
Similar to Euclidean cases, the next data point $x_{t+\Delta t}$ is defined along the marginal PF-ODE on the manifold. When $N\to\infty$, we have the following continuous-time limit of RCM:

\begin{theorem}\label{thm:rcm}
When $N\to\infty,\Delta t\to0$, the continuous-time RCM loss is 
\begin{equation}
\mathcal{L}^\infty_\textup{RCM}:=\lim_{N\to\infty}\mathcal{L}^N_\textup{RCM}=\mathbb{E}_{t,x_t}\left[w\left\|\mathrm{d}(\exp_x)_u\left(\dot\kappa v+\kappa\nabla_{\dot x} v\right)+\mathrm{d}(\exp u)_x\left(\dot x\right)\right\|_g^2\right],\label{eqn:rcm_loss}
\end{equation}
where $\mathrm{d}(\exp_x)_u,\mathrm{d}(\exp u)_{x}:T_x\mathcal{M}\to T_{f(x)}\mathcal{M}$ are the differentials of the exponential map with respect to the tangent vector $u=\kappa v$ and the base point $x$. $\nabla_{\dot x}$ denotes the covariant derivative with respect to the Levi-Civita connection of the Riemannian manifold $(\mathcal{M}, g)$ along the PF-ODE.
\end{theorem}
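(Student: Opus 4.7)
The strategy is to recognize the rescaled discrete RCM loss as a squared difference quotient along the smooth curve $\gamma(t):=f_\theta(x_t,t)$ that the denoiser traces out on $\mathcal{M}$ while $x_t$ follows the PF-ODE, to extract $\|\dot\gamma(t)\|_g^2$ in the $N\to\infty$ limit, and then to unfold $\dot\gamma$ via the chain rule for the exponential map viewed as a smooth map $T\mathcal{M}\to\mathcal{M}$.

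First, the stop-gradient in $f_{\theta^-}$ only prevents gradient flow; it does not alter the \emph{value} of the loss, so for the purpose of taking a pointwise limit I may replace $f_{\theta^-}$ by $f_\theta$. For each realization of the trajectory $t\mapsto x_t$ the curve $\gamma(t)=\exp_{x_t}(\kappa_t v_\theta(x_t,t))$ is smooth on $\mathcal{M}$, and the standard second-order expansion of the squared Riemannian distance along a smooth curve gives
\begin{equation*}
d_g^2\bigl(\gamma(t),\gamma(t+\Delta t)\bigr) = \|\dot\gamma(t)\|_g^2 (\Delta t)^2 + o\bigl((\Delta t)^2\bigr).
\end{equation*}
Multiplying by $N^2=(\Delta t)^{-2}$ and passing the limit through the expectation (under mild regularity of $v_\theta$, $w_t$, and the PF-ODE flow) yields $\mathcal{L}^\infty_\textup{RCM}=\mathbb{E}_{t,x_t}\bigl[w\|\dot\gamma(t)\|_g^2\bigr]$.

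It remains to compute $\dot\gamma(t)$. Setting $u_t:=\kappa_t v_\theta(x_t,t)\in T_{x_t}\mathcal{M}$, the curve $\gamma$ is the composition of the smooth section $t\mapsto(x_t,u_t)$ of $T\mathcal{M}$ with the global map $\exp$. The horizontal--vertical splitting of $T_{(x,u)}(T\mathcal{M})$ induced by the Levi-Civita connection decomposes $\mathrm{d}\exp$ into a horizontal piece $\mathrm{d}(\exp u)_x:T_x\mathcal{M}\to T_{\exp_x u}\mathcal{M}$ (varying $x$ while parallel-transporting $u$) and a vertical piece $\mathrm{d}(\exp_x)_u:T_x\mathcal{M}\to T_{\exp_x u}\mathcal{M}$ (varying $u$ at fixed $x$). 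Applying the chain rule in this splitting produces
\begin{equation*}
\dot\gamma(t) \;=\; \mathrm{d}(\exp u)_{x_t}(\dot x_t)\;+\;\mathrm{d}(\exp_{x_t})_{u_t}\!\left(\tfrac{D u_t}{\mathrm{d}t}\right),
\end{equation*}
where $D/\mathrm{d}t$ denotes the covariant derivative along the curve $x_t$. The Leibniz rule for the Levi-Civita connection applied to $u_t=\kappa_t v_\theta(x_t,t)$ then gives $Du_t/\mathrm{d}t = \dot\kappa_t v_\theta + \kappa_t \nabla_{\dot x_t} v_\theta$, and substituting this back reproduces \eqref{eqn:rcm_loss}.

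The main obstacle is justifying the decomposition step: literally "freezing $u$" while varying $x$ is \emph{a priori} ill-defined, since $u\in T_x\mathcal{M}$ sits in a tangent space that itself depends on $x$. The clean resolution is to work with the Ehresmann horizontal subspace of $T(T\mathcal{M})$ attached to the Levi-Civita connection, which amounts to extending $u$ along $x_t$ by parallel transport; $\mathrm{d}(\exp u)_x$ is then the restriction of $\mathrm{d}\exp$ to the horizontal lift, while $\mathrm{d}(\exp_x)_u$ is its restriction to the vertical fibre. Forcing this splitting is precisely what replaces the ordinary $t$-derivative of $u_t$ by the covariant derivative, and it is where the intrinsic geometry actually enters; the Taylor-expansion and Leibniz-rule steps are then routine. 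An equivalent but more pedestrian verification uses normal coordinates centred at $x_t$, in which the Christoffel symbols vanish at the base point and the two differentials acquire explicit closed forms that can be glued back into the intrinsic identity above.
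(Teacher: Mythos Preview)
Your proposal is correct and follows essentially the same route as the paper: identify the rescaled discrete loss with $\|\dot f\|_g^2$ in the limit, then expand $\dot f$ via the chain rule for $\exp$ to produce the two differential terms and the covariant derivative. The paper's own argument is a terse version of yours---it invokes $d_g^2(x,y)=\|\log_x y\|_g^2$, the first-order approximation $\log_x y\approx y-x$, and then ``applying the chain rule'' without further comment---whereas you explicitly unpack the horizontal--vertical (Ehresmann) splitting of $T(T\mathcal{M})$ to justify why the base-point variation contributes $\mathrm{d}(\exp u)_x(\dot x)$ and why the fibre variation involves $Du/\mathrm{d}t$ rather than a naive $t$-derivative; this is exactly the subtlety the paper skips over, and your treatment is the cleaner one.
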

\begin{proof}
Note that $d_g^2(x,y)=\|\log_x y\|_g^2$ holds for any $x,y\in\mathcal{M}$. As $\Delta t\to 0$, the two target points in Eq.\ref{eqn:discrete-rcm} also approach each other. Therefore, the first-order approximation $\log_x y\approx y-x$ holds. Now consider $(f_\theta(x_{t+\Delta t},t+\Delta t)-f_\theta(x_t,t))/\Delta t\to\dot f$. Applying the chain rule, we get:
\begin{equation}
    \dot f=\mathrm{d}(\exp_x)_u\left(\dot\kappa v+\kappa\nabla_{\dot x} v\right)+\mathrm{d}(\exp u)_x\left(\dot x\right).\label{eqn:dexp}
\end{equation}
% As $x_t$ traces a geodesic on the manifold when $t$ varies from 0 to 1, either along the conditional probability paths or the paths defined by $v$, one observation is that $v=\dot x$ should lead to zero variation in the target of the exponential map. Substituting into Eq.\ref{eqn:dexp}, we obtain the following identity:
% \begin{equation}
%     \mathrm{d}(\exp u)_x\left(\dot x\right)= \mathrm{d}(\exp_x)_u\left(\dot x-(1-t)\nabla_{\dot x} \dot x\right)\stackrel{(*)}{=}\mathrm{d}(\exp_x)_u\left(\dot x\right),
% \end{equation}
% where in $(*)$ we use the fact that the geodesic $x_t$ satisfies the geodesic equation $\nabla_{\dot x} \dot x=0$. Substituting back into Eq.\ref{eqn:dexp}, we have
% \begin{equation}
%     \frac{\mathrm{d}\exp}{\mathrm{d}t}=\mathrm{d}(\exp_x)_u\left(\dot x-v+(1-t)\nabla_{\dot x} v\right).
% \end{equation}
Marginalization over $t$ and $x_t$ concludes the proof.
\end{proof}

In the above formulation, we assume that the marginal vector field $\dot x$ is available via the pre-trained RFM model, leading to the \textbf{Riemannian consistency distillation (RCD)} where the marginal $\dot x$ is learned by a pre-trained Riemannian flow matching model. Similar to \cite{song2023consistency}, in the following theorem, we demonstrate that the same objective can be utilized with the conditional vector field $\dot x| x_1$, leading to \textbf{Riemannian consistency training (RCT)}.

\begin{theorem}\label{thm:rct}
With the stop-gradient operator $\theta^-$ in Eq.\ref{eqn:discrete-rcm}, the marginal vector field $\dot x$ in the RCD loss can be substituted with the conditional vector field $\dot x| x_1$.
\end{theorem}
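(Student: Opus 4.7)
My plan is to compare the parameter gradients of the RCD and RCT losses under the stop-gradient operator $\theta^-$ and show that they coincide to leading order in $\Delta t$, which justifies the substitution for training purposes. First, I would apply the Riemannian gradient formula to the squared geodesic distance in Eq.~\ref{eqn:discrete-rcm}: writing $y := f_\theta(x_t, t)$ and letting $z := f_{\theta^-}(x_{t+\Delta t}, t+\Delta t)$ be the detached target, the stop-gradient collapses $\nabla_\theta d_g^2(y, z)$ to $-2\langle \log_y z, \mathrm{d}_\theta y\rangle_g$. Then I would reuse the Taylor expansion from the proof of Theorem~\ref{thm:rcm} to obtain $\log_y z = \Delta t \cdot \dot f + O(\Delta t^2)$, with $\dot f$ as in Eq.~\ref{eqn:dexp}.

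The crux of the argument is the observation that $\dot f$ is \emph{affine} in the stepping vector field $\dot x$. Indeed, $\nabla_{\dot x} v$ is linear in $\dot x$ by the definition of the covariant derivative, and both $\mathrm{d}(\exp_x)_u$ and $\mathrm{d}(\exp u)_x$ are linear operators, so $\dot f = A_\theta(x_t, t) + B_\theta(x_t, t)[\dot x]$ where $A_\theta$, $B_\theta$, and $\mathrm{d}_\theta y$ depend only on $(x_t, t, \theta)$ and not on the upstream latents $(x_0, x_1)$. Because of the stop-gradient, the resulting parameter gradient is therefore \emph{linear} in $\dot x$, of the form $-2\Delta t \langle A_\theta + B_\theta[\dot x], \mathrm{d}_\theta y\rangle_g$ up to $O(\Delta t^2)$. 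This linearity is the structural reason why the marginal and conditional vector fields become interchangeable once the stop-gradient is applied.

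To close the argument, I would invoke the RFM marginalization identity $u_t^{\textup{marg}}(x_t) = \mathbb{E}_{(x_0, x_1)\mid x_t}[u_t(x_t\mid x_0, x_1)]$ together with Fubini. Since RCT draws $(x_0, x_1, x_t)$ so that $x_t$ is marginally distributed as $p_t$, conditioning on $x_t$ turns the inner expectation $\mathbb{E}[B_\theta[\dot x \mid x_1]]$ into $B_\theta[u_t^{\textup{marg}}(x_t)]$ by pulling the linear operator through the conditional expectation, while the $\dot x$-independent $A_\theta$ term is unaffected. Matching the two expressions term by term yields $\nabla_\theta \mathcal{L}^N_{\textup{RCT}} = \nabla_\theta \mathcal{L}^N_{\textup{RCD}}$ to leading order in $\Delta t$, which is the desired equivalence. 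The main obstacle I foresee is justifying the exchange of the conditional expectation with the tangent-space-valued operator $B_\theta$: one must verify that $B_\theta$, viewed as a linear map $T_{x_t}\mathcal{M}\to T_y \mathcal{M}$, genuinely depends only on $x_t$ (and not on the upstream latents), so that it commutes with the conditional expectation \textemdash{} a subtlety that is standard in Euclidean proofs but requires care in the Riemannian setting where tangent spaces at different base points are genuinely distinct.
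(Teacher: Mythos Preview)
Your proposal is correct and follows essentially the same route as the paper: compute the parameter gradient of the squared geodesic distance via $\nabla_\theta d_g^2(y,z)=-2\langle \log_y z,\mathrm{d}_\theta y\rangle_g$ under stop-gradient, Taylor-expand $\log_y z\approx \Delta t\,\dot f$, observe that $\dot f$ depends affinely on $\dot x$ (the paper states this as Lemma~\ref{lemma:dexp_linear}, calling it ``linear''), and then push the conditional expectation through using the marginalization identity $\dot x=\mathbb{E}[\dot x\mid x_1\,|\,x_t]$ (the paper's Lemma~\ref{lemma:tweedie}). Your decomposition $\dot f=A_\theta+B_\theta[\dot x]$ and the remark that $B_\theta$ depends only on $(x_t,t,\theta)$ are slightly more explicit than the paper's presentation, but the argument is the same.
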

The following lemma is the key result for extending RCD to RCT:
\begin{lemma}\label{lemma:dexp_linear}
$\dot f$ is linear in $\dot x$.
\end{lemma}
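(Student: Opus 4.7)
The plan is to read the lemma off directly from Eq.~\ref{eqn:dexp} by tracking how $\dot x$ enters each of the three summands, and then appealing to two elementary facts: the tensoriality of the Levi--Civita connection in its lower slot, and the fact that the differential of any smooth map between manifolds is a linear map on each tangent space.

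First I would rewrite Eq.~\ref{eqn:dexp} using linearity of $\mathrm{d}(\exp_x)_u$ in its argument as
\begin{equation*}
\dot f \;=\; \dot\kappa\,\mathrm{d}(\exp_x)_u(v) \;+\; \kappa\,\mathrm{d}(\exp_x)_u\!\left(\nabla_{\dot x}v\right) \;+\; \mathrm{d}(\exp u)_x(\dot x),
\end{equation*}
and then classify the three terms according to their $\dot x$-dependence. The first summand depends only on $t$, $x_t$, and the learned field $v_\theta(x_t,t)$; the schedule derivative $\dot\kappa$ does not involve $\dot x$ at all, so this piece contributes only a constant (in $\dot x$) vector in $T_{f(x)}\mathcal{M}$.

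For the middle summand, I would invoke the standard tensoriality of the covariant derivative in its direction argument, $\nabla_{aX_1+bX_2}v = a\nabla_{X_1}v + b\nabla_{X_2}v$, so that $\dot x\mapsto\kappa\nabla_{\dot x}v$ is a linear map $T_x\mathcal{M}\to T_x\mathcal{M}$; post-composing with the linear differential $\mathrm{d}(\exp_x)_u:T_x\mathcal{M}\to T_{f(x)}\mathcal{M}$ preserves this linearity. For the third summand, $\mathrm{d}(\exp u)_x$ is by definition the differential at $x$ of the smooth map $y\mapsto\exp_y u$ (with $u$ held fixed in the base-point variation), which is a linear map on $T_x\mathcal{M}$ by construction. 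Summing the three contributions then gives $\dot f = A(t,x,v)\,\dot x + b(t,x,v)$ with $A$ linear in $\dot x$ and $b$ independent of $\dot x$, which is precisely the assertion of the lemma.

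The main, and rather mild, subtlety is one of terminology: strictly speaking $\dot f$ is \emph{affine} rather than purely linear in $\dot x$, because of the $\dot\kappa\,\mathrm{d}(\exp_x)_u(v)$ offset. For the marginalization argument underlying Theorem~\ref{thm:rct}, however, this offset depends only on $(t,x_t)$ and not on the conditioning pair $(x_0,x_1)$, so it passes through the conditional expectation over $(x_0,x_1)\mid x_t$ unchanged, and the affine-versus-linear distinction is harmless when swapping the marginal $\dot x$ for the conditional $\dot x\mid x_1$ inside the RCD loss.
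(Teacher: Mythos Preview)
Your proposal is correct and follows essentially the same approach as the paper's proof: both argue by observing that the covariant derivative is tensorial (linear) in its direction argument and that the two differentials of the exponential map are linear maps between tangent spaces. Your treatment is in fact more careful than the paper's, since you explicitly separate out the $\dot\kappa\,\mathrm{d}(\exp_x)_u(v)$ term and note the affine-versus-linear subtlety, together with why it is harmless for the marginalization in Theorem~\ref{thm:rct}.
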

\begin{proof}
By the definition of covariant derivative, $\nabla_{\dot x}$ is linear with respect to $\dot x$. Also note that $\mathrm{d}(\exp_x)_u,\mathrm{d}(\exp u)_{x}$ are both linear mappings. Therefore, the final result is linear in $\dot x$.
\end{proof}

\begin{lemma}\label{lemma:tweedie}
$\dot x=\mathbb{E}[(\dot x| x_1)| x_t]$, or using the Riemannian integral $\int_\mathcal{M}u(x| x_1)p_t(x_1| x)\diff\mathrm{vol}_{x_1}$.
\end{lemma}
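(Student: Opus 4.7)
The plan is to mimic the standard marginalization argument from \citet{lipman2022flow,chen2023riemannian} but cast entirely in Riemannian terms, using the continuity equation on $(\mathcal{M},g)$ as the unique characterization of the vector field generating a probability path. First I would recall that the marginal path $p_t$ and marginal vector field $u_t$ satisfy the Riemannian continuity equation $\partial_t p_t + \mathrm{div}_g(p_t u_t) = 0$, and analogously each conditional pair $(p_t(\cdot\mid x_1), u_t(\cdot\mid x_1))$ satisfies $\partial_t p_t(x\mid x_1) + \mathrm{div}_g(p_t(\cdot\mid x_1) u_t(\cdot\mid x_1))(x) = 0$. The marginalization identity $p_t(x) = \int_\mathcal{M} p_t(x\mid x_1)\, q(x_1)\, \diff\mathrm{vol}_{x_1}$ is the starting point.

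Next I would define the candidate marginal vector field
\begin{equation}
    \tilde u_t(x) := \int_\mathcal{M} u_t(x\mid x_1)\, p_t(x_1\mid x)\, \diff\mathrm{vol}_{x_1},
\end{equation}
which is well posed as a tangent vector in $T_x\mathcal{M}$ because the integrand at each $x$ lies in the single tangent space $T_x\mathcal{M}$ (so no parallel transport across different base points is needed). Applying Bayes' rule $p_t(x_1\mid x) = p_t(x\mid x_1)q(x_1)/p_t(x)$, I would rewrite $p_t(x)\tilde u_t(x) = \int u_t(x\mid x_1) p_t(x\mid x_1) q(x_1)\,\diff\mathrm{vol}_{x_1}$. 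Differentiating the marginal identity in $t$, exchanging $\partial_t$ with the integral over $x_1$ (justified under mild regularity), and substituting each conditional continuity equation gives
\begin{equation}
    \partial_t p_t(x) = -\int_\mathcal{M} \mathrm{div}_g\!\bigl(p_t(\cdot\mid x_1) u_t(\cdot\mid x_1)\bigr)(x)\, q(x_1)\, \diff\mathrm{vol}_{x_1} = -\mathrm{div}_g\!\bigl(p_t \tilde u_t\bigr)(x),
\end{equation}
where linearity of $\mathrm{div}_g$ allows pulling the divergence outside the integral. Hence $\tilde u_t$ generates the same marginal path $p_t$, and by uniqueness (up to divergence-free components, which can be ruled out by the standard argument in \citet{lipman2022flow} that $u_t$ is the unique field consistent with the pushforward flow) we conclude $u_t = \tilde u_t$, which is exactly the claim $\dot x = \mathbb{E}[(\dot x\mid x_1)\mid x_t]$.

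The main obstacle I anticipate is ensuring that the Riemannian integration-by-parts and divergence manipulations are carried out with the correct measure-theoretic care: the volume form $\diff\mathrm{vol}$ interacts nontrivially with curvature, and one must verify that interchanging $\partial_t$ and $\int_\mathcal{M}\cdot\,\diff\mathrm{vol}_{x_1}$ is legitimate (this is routine under the smoothness and decay assumptions already implicit in the RFM setup, but worth stating). A secondary subtlety is clarifying in what sense $\tilde u_t$ inherits the smoothness and tangent-space-valuedness required for the uniqueness step; both follow from the fact that at each fixed $x$ the integrand is a family of vectors in the same $T_x\mathcal{M}$, so pointwise averaging preserves the tangent-bundle structure without any need for parallel transport.
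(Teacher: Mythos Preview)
Your proposal is correct and is precisely the standard marginalization argument from \citet{lipman2022flow,chen2023riemannian} that the paper points to; the paper itself does not spell out a proof but simply remarks that the lemma ``generalizes the Euclidean case and can be verified using the Bayes' rule on Riemannian manifolds \cite{chen2024flow}.'' Your derivation is thus a faithful expansion of exactly that citation, with the one cosmetic caveat that the ``uniqueness'' step is cleaner if framed as in the original flow-matching papers---namely, the marginal field $u_t$ is \emph{defined} as this conditional expectation and the continuity-equation computation then verifies it generates $p_t$---rather than as an a posteriori identification up to divergence-free parts.
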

Lemma~\ref{lemma:tweedie} generalizes the Euclidean case and can be verified using the Bayes' rule on Riemannian manifolds \cite{chen2024flow}. It draws the connection between the conditional and marginal vector fields. The key to the proof of Theorem~\ref{thm:rct} is that we want to move the expectation outside to use the conditional vector fields.

\begin{proof}[Proof for Theorem \ref{thm:rct}]
Let $\tilde f$ denote the denoising result in Eq.\ref{eqn:denoise} at time step $t+\Delta t$ and $\Delta f=\tilde f-f$.
Note that $d_g^2(f_\theta,\tilde f_{\theta})=\|\log_{f_\theta}\tilde f_{\theta}\|_g^2$. Taking the gradient on both sides, we have
\begin{equation}
\begin{aligned}
    \frac{1}{2}\nabla_\theta d_g^2(f_\theta,\tilde f_{\theta^-})
    &=\frac{1}{2}\nabla_\theta \langle\log_{f_\theta}\tilde f_{\theta^-},\log_{f_\theta}\tilde f_{\theta^-}\rangle_g=\langle\log_{f_\theta}\tilde f_{\theta},\nabla_\theta\log_{f_\theta}\tilde f_{\theta^-}\rangle_g\\
    &\approx \langle\Delta f_{\theta},\nabla_\theta(\tilde f_{\theta^-}-f_\theta)\rangle_g=-\langle\Delta f_{\theta},\nabla_\theta f_\theta\rangle_g.
\end{aligned}
\end{equation}

Using Lemma \ref{lemma:dexp_linear}, $\frac{\Delta f_\theta}{\Delta t}\to\dot f$ is linear in $\dot x$ and the second argument $\nabla_\theta f_\theta$ is now independent of $\dot x$. This indicates $\nabla_\theta d_g^2(f_\theta,\tilde f_{\theta^-})$ is also linear in $\dot x$. Therefore, when using Lemma~\ref{lemma:tweedie} to marginalize over $\dot x$, we can simply move the expectation over $x_t$ outside the gradient operation, leaving the conditional vector field $\dot x| x_1$ inside the expectation.
\end{proof}

The proof above inspires us to use an alternative loss as
\begin{equation}
\mathcal{L}_\text{RCM}^\infty:=\mathbb{E}_{t,x_t}\left[w\left\langle f_{\theta^-}-f_\theta+\dot f_{\theta^-},\dot f_{\theta^-}\right\rangle_g\right],\quad \dot f=\mathrm{d}(\exp_x)_u\left(\dot\kappa v+\kappa\nabla_{\dot x} v\right)+\mathrm{d}(\exp u)_x\left(\dot x\right)\label{eqn:rct_loss}
\end{equation}
where the differentials are calculated along conditional vector fields. It is easy to verify that the loss in Eq.\ref{eqn:rct_loss} has the same value as Eq.\ref{eqn:rcm_loss} and the same gradient as demonstrated in the proof above.

\subsection{Simplified Riemannian Consistency Model}
The loss in Eq.\ref{eqn:rct_loss} involves the calculation of the differentials of the exponential map $\mathrm{d}(\exp_x)_u,\mathrm{d}(\exp u)_{x}$, which may require additional complex symbolic calculation for efficient implementation (see Appendix~\ref{supp:manifold}). Instead, we proposed an alternative loss on the vector fields that eliminates the need to compute these differentials:
\begin{equation}
\mathcal{L}_\text{sRCM}^\infty:=\mathbb{E}_{t,x_t}\left[w\left\| \dot x+\dot\kappa v+\kappa\nabla_{\dot x}v\right\|_g^2\right]=\mathbb{E}_{t,x_t}\left[ w\left\langle v_{\theta^-}-v_\theta+\dot u_{\theta^-},\dot u_{\theta^-}\right\rangle_g\right],\label{eqn:rct_loss_simple}
\end{equation}
where $\dot u_{\theta}:=\dot x+\dot\kappa v_\theta+\kappa\nabla_{\dot x}v_\theta$. Here, we make one key approximation that $\mathrm{d}(\exp_x)_u\approx \mathrm{d}(\exp u)_{x}$ such that the second term can be combined with the first one. Also note that $\mathrm{d}(\exp_x)_u$ is a linear mapping that does not change the optimality of the loss. Therefore, we simply ignore such a transform and optimize the norm of $\dot u_{\theta}$ at $x_t$. Indeed, for the flat-torus and any manifold whose exponential map is symmetric, the identity $\mathrm{d}(\exp_x)_u=\mathrm{d}(\exp u)_{x}$ always holds. For general Riemannian manifolds, the following result holds:
\begin{proposition}
    For $u,v\in T_x\mathcal{M}$, if $u,v$ are parallel, then $\mathrm{d}(\exp_x)_u(v)=\mathrm{d}(\exp u)_{x}(v)$.
\end{proposition}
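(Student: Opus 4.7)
The plan is to interpret both differentials as time-derivatives of two natural one-parameter families of curves through $\exp_x u$, and to use the reparameterization property of geodesics to collapse both variations into the same rescaled radial curve when $v$ is parallel to $u$.

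First I would unpack the notation. Assuming the non-degenerate case $u\neq 0$, write $v=\lambda u$ for some $\lambda\in\mathbb{R}$; the case $u=0$ is immediate, since $\exp_x 0=x$ so both differentials reduce to the identity on $T_x\mathcal{M}$. By definition,
\[
\mathrm{d}(\exp_x)_u(v)=\frac{\mathrm{d}}{\mathrm{d}s}\bigg|_{s=0}\exp_x(u+sv),
\]
a variation at fixed basepoint, whereas
\[
\mathrm{d}(\exp u)_x(v)=\frac{\mathrm{d}}{\mathrm{d}s}\bigg|_{s=0}\exp_{\gamma(s)}\bigl(P^\gamma_s u\bigr),
\]
for any smooth curve $\gamma$ with $\gamma(0)=x,\gamma'(0)=v$, where $P^\gamma_s$ denotes Levi-Civita parallel transport along $\gamma$. (The Levi-Civita connection furnishes a canonical horizontal/vertical splitting of $T(T\mathcal{M})$, which makes this basepoint-differential well-defined independently of the choice of $\gamma$.)

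Next, let $\alpha(t)=\exp_x(tu)$ be the geodesic emanating from $x$ with initial velocity $u$. For the first differential, $\exp_x(u+s\lambda u)=\exp_x((1+s\lambda)u)=\alpha(1+s\lambda)$, so the derivative at $s=0$ is $\lambda\alpha'(1)$. For the second, I would exploit the well-definedness and choose $\gamma$ to be the geodesic $\gamma(s)=\exp_x(s\lambda u)=\alpha(\lambda s)$. Since $\gamma$ is itself a geodesic, its velocity $\gamma'(s)=\lambda\alpha'(\lambda s)$ is parallel along $\gamma$, which gives $P^\gamma_s u=\alpha'(\lambda s)$. Then $\exp_{\gamma(s)}(P^\gamma_s u)$ is the value at affine parameter $1$ of the geodesic starting at $\alpha(\lambda s)$ with initial velocity $\alpha'(\lambda s)$, and by the reparameterization property of geodesics this equals $\alpha(\lambda s+1)$. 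Differentiating at $s=0$ yields $\lambda\alpha'(1)$, matching the first expression.

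The main obstacle is not the computation itself but correctly unwinding the two notations: $\mathrm{d}(\exp_x)_u$ is the usual fiber-direction derivative of the exponential map at fixed basepoint, while $\mathrm{d}(\exp u)_x$ encodes implicit parallel transport of $u$ along the basepoint variation. Once the setup is clear, the argument reduces to the observation that both perturbations move $\exp_x u$ along the same radial geodesic $\alpha$ at the same speed, so no Jacobi-field or curvature computation is needed — this is essentially the radial content of the Gauss lemma in disguise.
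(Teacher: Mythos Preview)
Your proof is correct and rests on the same geometric content as the paper's---the geodesic semigroup property $\exp_{\alpha(s)}(\alpha'(s))=\alpha(s+1)$ for $\alpha(t)=\exp_x(tu)$---but the packaging differs. The paper argues indirectly: it considers the consistency map $f(x_t,t)=\exp_{x_t}((1-t)u_t)$ along the geodesic, observes it is identically $\exp_x u$, differentiates via the chain-rule formula of Theorem~\ref{thm:rcm} (Eq.~\ref{eqn:dexp}), and reads off $-\mathrm{d}(\exp_x)_u(u)+\mathrm{d}(\exp u)_x(u)=0$ after noting $\nabla_{\dot x}u=0$. You instead compute the two differentials separately from their definitions and show each equals $\lambda\alpha'(1)$. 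Your route is more self-contained since it does not invoke Eq.~\ref{eqn:dexp}, and it makes explicit that $\mathrm{d}(\exp u)_x$ is the horizontal (parallel-transport) basepoint derivative---an interpretation the paper leaves implicit in the main text and only spells out in the appendix. The paper's version ties the result back into the consistency-model framework; yours is a cleaner standalone Riemannian-geometry computation.
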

\begin{proof}
As both differentials are linear, it suffices to verify $\mathrm{d}(\exp_x)_u(u)=\mathrm{d}(\exp u)_{x}(u)$. Consider the consistency parameterization $f(x_t,t):=\exp_{x_t} (1-t)u_t$, where $u_t=\Pi_{x_0,x_t;\gamma}(u)$ is the parallel-transported tangent vector along the geodesic $\gamma$ from $x_0$ to $x_t$. Therefore, $f(x_t,t)$ is a constant-speed dynamics along the geodesics defined by point $x_0=x$ and vector field $u_0=u\in T_x\mathcal{M}$. As $f$ travels in constant speed, we have $f(x_t,t)\equiv \exp_x u,\forall t\in[0,1]$. Taking the derivative with respect to $t$, we obtain:
\begin{equation}
    \dot f=\mathrm{d}(\exp_x)_u\left(-u+(1-t)\nabla_{\dot x} u\right)+\mathrm{d}(\exp u)_x\left(\dot x\right)=0.
\end{equation}
As $\dot x=u$, the covariant derivative $\nabla_{\dot x} u=\nabla_{\dot x} \dot x$ vanishes as $f$ traces a geodesic. Therefore, we have $\mathrm{d}(\exp_x)_u(u)=\mathrm{d}(\exp u)_x(\dot x)=\mathrm{d}(\exp u)_x(u)$, which concludes the proof.
\end{proof}
In the original RCM loss, the two tangent vectors follow the predicted and the marginal vector field directions, respectively. This indicates that, if the pre-trained model approximates the marginal vector field well, the approximation shall be more accurate. The adaptation from RCD to RCT follows a similar procedure of marginalization in the proof for Theorem~\ref{thm:rct}. 
We summarize the simplified RCD and RCT training procedure in Algorithm~\ref{alg:rcd} and \ref{alg:rct}, where the key differences are highlighted in red. The original RCD and RCT follow similar training algorithms except for optimizing the loss in Eq.\ref{eqn:rct_loss}. Sampling from RCM follows the same setup as the Euclidean CM, which is described in Appendix~\ref{supp:sample}.

\begin{minipage}{.49\textwidth}
\centering
\begin{algorithm}[H]
\caption{Simplified Riemannian Consistency Distillation (sRCD)}\label{alg:rcd}
\begin{algorithmic}[1]
\State \textbf{Input:} Pre-trained RFM $s_\phi$.
\While{not converged}
    \State Sample data $x_1$, noise $x_0$, and $t$.
    \State Calculate $x_t=\exp_{x_1}(\kappa_t\log_{x_1}(x_0))$.
    \State \textcolor{blue}{Calculate $s_\phi(x_t,t)$}.
    \State \textcolor{blue}{Calculate $\dot u_\theta=s+\dot\kappa v_\theta+\kappa\nabla_{s} v_\theta$}.
    \State Optimize the loss with \par $\nabla_\theta w\langle v_{\theta^-}-v_\theta+\dot u_{\theta^-},\dot u_{\theta^-}\rangle_g$.
\EndWhile
\end{algorithmic}
\end{algorithm}  
\end{minipage}\hfill%
\begin{minipage}{0.49\textwidth}
\centering
\begin{algorithm}[H]
\caption{Simplified Riemannian Consistency Training (sRCT)}\label{alg:rct}
\begin{algorithmic}[1]
\State \textbf{Input:} None.
\While{not converged}
    \State Sample data $x_1$, noise $x_0$, and $t$.
    \State Calculate $x_t=\exp_{x_1}(\kappa_t\log_{x_1}(x_0))$.
    \State \textcolor{blue}{Calculate $\dot x_t=\dot \kappa_t\log_{x_t}x_1/\kappa_t$}.
    \State \textcolor{blue}{Calculate $\dot u_\theta=\dot x+\dot\kappa v_\theta+\kappa\nabla_{\dot x} v_\theta$}.
    \State Optimize the loss with \par $\nabla_\theta w\langle v_{\theta^-}-v_\theta+\dot u_{\theta^-},\dot u_{\theta^-}\rangle_g$.
\EndWhile
\end{algorithmic}
\end{algorithm}  
\end{minipage}

\subsection{Kinematics Perspective of Riemannian Consistency Model}
\begin{figure}[ht]
    \centering
    \includegraphics[width=\linewidth]{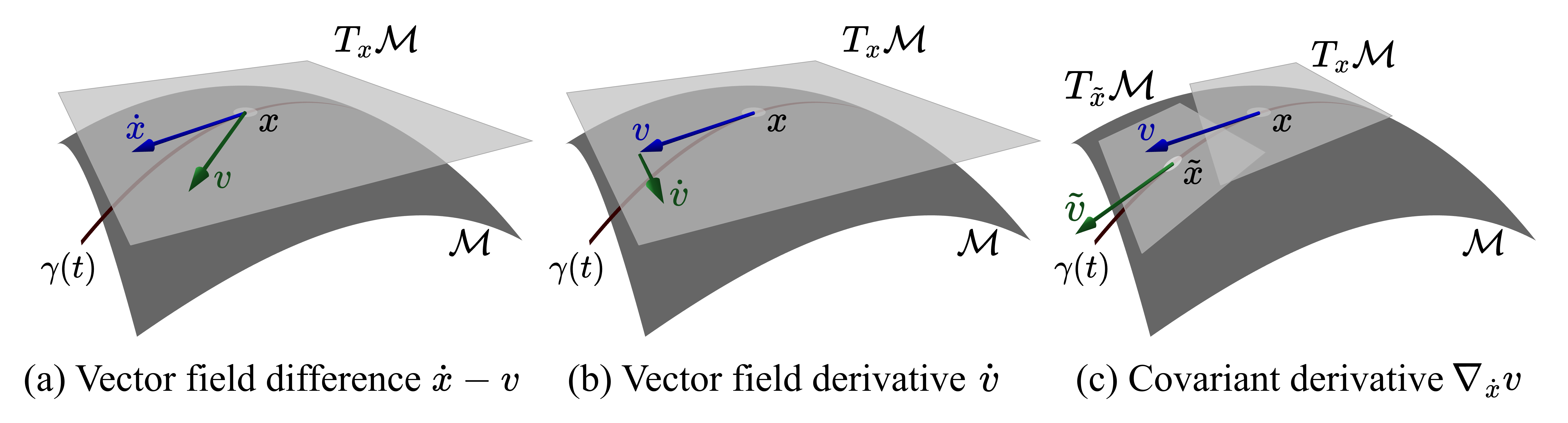}
    \vspace{-1.5em}
    \caption{Three components of the variations in the consistency objective.}
    \label{fig:kinematics}
\end{figure}

We now provide a perspective from kinematics that intuitively explains the terms in the RCM objective. Consider a point moving on the Riemannian manifold, the RCM objective essentially enforces the infinitesimal equilibrium on the target $f_\theta(x_t,t)$ along the PF-ODE of $x$. Locally, the change of the final target can be decomposed into three components:
\begin{itemize}[topsep=0pt,itemsep=-1ex,partopsep=1ex,parsep=1ex]
    \item The \textit{difference} in the predicted and marginal vector fields, Figure~\ref{fig:kinematics}(a).
    \item The \textit{intrinsic change} of the vector field, Figure~\ref{fig:kinematics}(b).
    \item The \textit{extrinsic change} of the vector field due to the geometric constraint, Figure~\ref{fig:kinematics}(c).
\end{itemize}
The first and second terms are more intuitive, represented by the difference and derivative of the vector field. Also appearing in the Euclidean CM in \cite{yang2024consistency}'s parameterization, these two terms can be directly generalized to any Riemannian manifold with the standard calculation in the tangent space $T_{x_t}\mathcal{M}$, a vector space where all vector calculations are valid with the additional Riemannian metric.

The third term, however, is unique for general Riemannian manifolds, as it involves the geometric properties of the manifold. Intuitively, the curved geometry leads to different tangent spaces at different points, where tangent vectors are not directly comparable. In order to differentiate the vector field at adjacent points, the \emph{covariant derivative} is introduced as a generalization of the directional derivative in the Euclidean case. More concretely, the covariant derivative $\nabla_{\dot x}v$ describes how the vector field $v$ changes along the curve defined by $\dot x$. In other words, even if the vector field $v$ is ``constant'' along the curve, its time derivative is not necessarily zero; it is only when we consider such additional geometric properties that we can arrive at the results that the covariant derivative is indeed zero for a constant vector field. In this case, the vector field is called \emph{parallel-transported} along the curve $\dot x$.

In this way, we have included all three components that affect the target position in the kinematics of the Riemannian manifold, with the last extrinsic change capturing the manifold's geometric properties. 
We now discuss some concrete examples of Riemannian manifolds to demonstrate how different RCMs can explicitly or implicitly learn the consistent kinematics on the manifold. Specifically, we focus on the covariant derivative $\nabla_{\dot x}$, which appears in all continuous-time RCM losses. The explicit mathematical formulae can be found in Appendix~\ref{supp:manifold}.

\paragraph{Euclidean Space.} The Euclidean space is a flat manifold with the canonical Euclidean metric $g_{ij}=\delta_{ij}$. The covariant derivative reduces to the common time derivative as $\nabla_{\dot x}v=\dot v$. This intuitively makes sense as the vector field can be trivially transported everywhere on the flat manifold. With the linear scheduler $\kappa_t=1-t$, we obtain the original vector field consistency model loss $\|\dot x-v+(1-t)\dot v\|$ in \cite{yang2024consistency} as expected. A similar analysis holds for the flat torus $\mathbb{T}^n=(S^1)^n$, where $S^1$ is the 1D spherical manifold (see below).

\paragraph{Spherical Manifold.} The $n$-sphere $S^n=\{\|x\|=1\mid x\in\mathbb{R}^{n+1}\}$ is a $n$-dimensional manifold which inherits the Euclidean metric of $\mathbb{R}^{n+1}$. The covariant derivative on sphere reads $\nabla_{\dot x} v=\dot v+\langle v, \dot x\rangle x=0$.
If $v=\dot x$ such that the first variation is perfectly optimized, the acceleration becomes $\dot v=-x\langle v, v\rangle$ with a magnitude of $\langle v, v\rangle=\|v\|^2$ and points in the inverse direction of $x$ towards the origin. This result coincides with the acceleration formula for uniform circular motion, which reads $\|a\|=\|v\|^2/r$, with the direction also pointing to the center. In this way, with the non-zero curvature, the covariant derivative provides additional geometry-aware information for the consistency objective.

\paragraph{3D Rotation Group.} The 3D rotation group $\mathrm{SO}(3)$ of all 3D rotation matrices is a Lie group with a Riemannian structure. With the group property, the tangent space at every point is isomorphic to the tangent space $\mathfrak{g}=T_e G$ at the identity element $e=I$, also known as the Lie algebra. The Lie algebra of $\mathrm{SO}(3)$ is $\mathfrak{so}(3)$, the vector space of skew-symmetric matrices. Using the 3-vector representations, $\mathrm{SO}(3)$ has a natural bi-invariant Riemannian metric $g(u,v)=\langle u, v\rangle$. The covariant derivative can be calculated as $\nabla_{\dot x} v=\dot v+\frac{1}{2} [\dot x,v]$, where $[u,v]=u\times v$ is the cross-product for 3-vectors.
This formulation has a close relation to the rotating frame of reference in kinematics, where the additional cross product represents the Coriolis term induced by the non-inertial rotating reference frame.
Furthermore, as $[v,v]$ vanishes, we have $\nabla_{v} v=\dot v$. Essentially, this condition indicates that $v$ should be a left-invariant vector field to trace a geodesic on $\mathrm{SO}(3)$.

% For the differential of the exponential map, the following identity holds for any Lie group (at the identity element $e$):
% \begin{equation}
%     \mathrm{d}(\exp)_u(w)=\exp u\frac{1-e^{-\mathrm{ad}_u}}{\mathrm{ad}_u}w=\exp u\sum_{k=0}^\infty\frac{(-1)^k}{(k+1)!}\left(\mathrm{ad}_u\right)^k w,
% \end{equation}
% where $\mathrm{ad}_u w=[u,w]$ is the adjoint action of the Lie algebra. On $\mathrm{SO}(3)$, direct calculation gives
% \begin{equation}
%     \mathrm{d}(\exp)_u(w)=\exp u\left(w+\frac{1-\cos\|u\|}{\|u\|^2}[u,w]+\frac{\|u\|-\sin\|u\|}{\|u\|^3}[u,[u,w]]\right).
% \end{equation}

\section{Experiments}
To demonstrate the effectiveness of the RCM framework on Riemannian manifolds, we carry out extensive experiments on various non-Euclidean settings. In addition to Riemannian consistency distillation and training (\textbf{RCD}, \textbf{RCT}), we also test with the simplified objective in Eq.\ref{eqn:rct_loss_simple} (\textbf{sRCD}, \textbf{sRCT}). Furthermore, to demonstrate the advantage of continuous-time RCM, we include the discrete-time RCD (\textbf{dRCD}) with a discretization step $\Delta t=10^{-2}$. 
In principle, as all Riemannian manifolds used in this work have a natural inclusion mapping to the ambient Euclidean space, the Euclidean CM can be directly applied. We call such a naive adaptation (on distillation) \textbf{CD$_\text{naive}$}/\textbf{CT$_\text{naive}$}, to distinguish from our RCD. 
We use a 2-step generation setup for all the models above during inference and set the fixed intermediate time step to $t=0.8$ for all manifolds and datasets.
As a reference, the results for the 100-step (\textbf{RFM-100}) and 2-step (\textbf{RFM-2}) Riemannian flow matching model are also provided.

Across all experiments on all manifolds, we employ the same network architecture, with the sole difference being the input and output dimensions that vary according to the manifold dimension. We use the linear scheduler $\kappa_t=1-t$ with the weighing function $w_t=(\dot\kappa_t/\kappa)^2=t^2/(1-t)^2$ following \cite{lu2024simplifying}.
Given the critical importance of stable Jacobian-vector products (JVP) $\dot v=\partial_t v+\dot x\partial_x v$ for training consistency models, we followed EDM2 \cite{karras2024analyzing} and utilized magnitude-preserving fully-connected (MP-FC) layers with force weight normalization. Additionally, we incorporated time information into the model using magnitude-preserving Fourier features and concatenation. 
% The main trunk of the network consisted of multiple stacked MP-FC layers with residual connections. We initially increased the hidden dimensions and then reduced them to the dimension of the corresponding dimension at the output. 
The detailed network architecture and hyperparameters are provided in Appendix~\ref{supp:experiment}.

To provide a model-agnostic evaluation metric for the generation quality, we use nonparametric clustering approaches of kernel density estimation (KDE) and maximum mean discrepancy (MMD) \cite{gretton2012kernel}. 
Specifically, for the 2-sphere datasets, we use KDE with the von Mises–Fisher kernel with a bandwidth of 0.02 and the haversine distance (geodesic distance on sphere) to estimate the spherical densities for the ground truth data and the generated samples of the same size. We then mesh-grid the sphere and calculate the Kullback–Leibler divergence (KLD) between the two kernel densities.
For flat-tori and $\mathrm{SO}(3)$, the analogue of the standard Gaussian kernel is not well-defined, so we switch to MMD to measure distributional fitness. MMD is a kernel-based, distribution-free two-sample test, with a lower value indicating better distributional fitness. On the two sample sets, the MMD can be calculated as:
\begin{equation}
    \mathrm{MMD}^2(X,Y)=\frac{1}{n(n-1)}\sum_{i,j=1}^nk(x_i,x_j)+k(y_i,y_j)-2k(x_i,y_j),\quad  k(x,y):=\exp(-\gamma d^2_g(x,y)),
\end{equation}
where the symmetric Gaussian-like kernel function $k$ is defined using the geodesic distance $d_g$ with the bandwidth parameter $\gamma=1$.

\subsection{Spherical Manifold}

\begin{table}[ht]
\centering
\caption{KL divergence between the estimated kernel densities on the 2-sphere datasets. The dataset size is noted. Except for RFM-100, the best results are in \textbf{bold} and the second best are \underline{underlined}.}\label{tab:sphere}
\small
\begin{tabular}{@{}lcccccccc@{}}
\toprule
\multirow{2}{*}{KLD$\downarrow$} & \multicolumn{2}{c}{Flow Matching} & \multicolumn{4}{c}{Consistency Distillation} & \multicolumn{2}{c}{Consistency Training} \\ \cmidrule(lr){2-3} \cmidrule(lr){4-7} \cmidrule(lr){8-9} 
 & RFM-100 & RFM-2 & sRCD & RCD & dRCD & CD$_\text{naive}$ & sRCT & RCT \\ \midrule
Earthquake {\tiny 6,124} & 1.51 & 10.99 & \textbf{2.13} & \underline{2.22} & 6.20 & 3.66 & 2.38 & 2.38 \\
Volcano {\tiny 829} & 1.77 & 35.40 & \textbf{3.36} & \underline{3.84} & 17.19 & 5.44 & 4.47 & 4.78 \\
Fire {\tiny 4,877} & 0.53 & 9.79 & \textbf{1.65} & \underline{1.71} & 8.01 & 3.39 & 1.74 & 1.72 \\
Flood {\tiny 12,810} & 1.33 & 8.17 & \underline{2.27} & 2.41 & 6.21 & 2.81 & 2.39 & \textbf{2.23} \\ \bottomrule
\end{tabular}
\end{table}

For our experiments on spherical manifolds, we utilize the real-world data comprising four distinct earth location datasets: volcanic eruptions \cite{national2019ncei2}, earthquakes \cite{national2019ncei}, floods \cite{brakenridge2021global}, and wildfires \cite{smith2016land}, collected by \cite{mathieu2020riemannian}. Following \cite{chen2023riemannian}, we assume the Earth's surface to be a perfect sphere. For the calculation of KLD between estimated kernel densities, we sampled the same number of points as the dataset for each model.

\begin{figure}[htb]
    \centering
    \includegraphics[width=\linewidth]{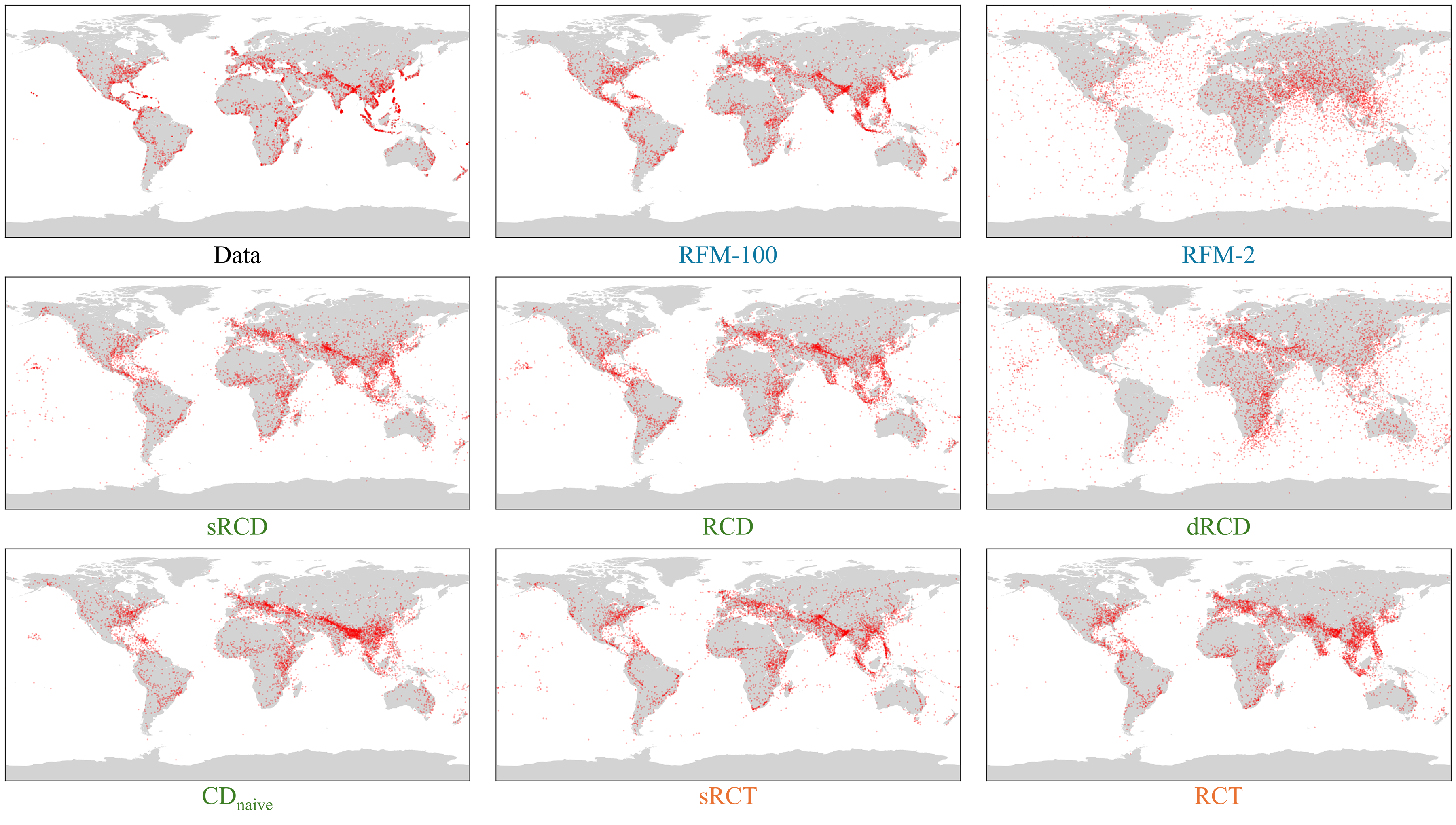}
    \vspace{-1.5em}
    \caption{Generations on the Flood dataset on the 2-sphere. Except for RFM-100, all models use 2-step generation. The FM, CD, and CT models are colored in blue, green, and orange.}
    \label{fig:flood}
    % \vspace{-0.5em}
\end{figure}

As shown in Table~\ref{tab:sphere} and Figure~\ref{fig:flood} on the Flood data, the 100-step RFM achieves the lowest KLD as expected. However, its performance degrades significantly with 2-step sampling. In contrast, all consistency methods markedly improved sample quality in the 2-step setting. Directly using a naive Euclidean consistency model loss yielded the poorest results, highlighting the necessity of RCM's covariant derivative formula to accurately characterize the manifold's geometry. 
Similar to the results in \cite{song2023consistency}, the results from consistency distillation were superior to those from consistency training. Furthermore, the simplified loss we proposed streamlines the calculation of the differentials of the exponential map without sacrificing generation quality. As a result, the simplified loss yields even better results empirically.

\subsection{Flat Torus}

We evaluate our RCM framework on flat tori using a synthetic checkerboard dataset\footnote{We refer to \href{https://github.com/facebookresearch/flow_matching/blob/main/examples/2d_riemannian_flow_matching_flat_torus.ipynb}{Flow Matching Guide and Code} to implement our synthetic dataset.} as well as pre-processed protein \cite{lovell2003structure} and RNA \cite{murray2003rna} datasets, whose torsion angles can be represented on the 2D and 7D tori, respectively. Specifically, for the checkerboard data, we randomly generated 100k sampling points for training. The protein dataset contains 166,305 samples, and the RNA dataset contains 9,473 samples. 

\begin{wraptable}{r}{0.68\textwidth} 
% \vspace{-1em}
% \begin{table}[H]
% \centering
\caption{Maximum mean discrepancy (MMD) scores on the 2D flat torus. Except for RFM-100, the best results are in \textbf{bold} and the second best are \underline{underlined}.}\label{tab:torus}
\vspace{-.5em}
\resizebox{\linewidth}{!}{
\begin{tabular}{@{}lcccccc@{}}
\toprule
\multirow{2}{*}{\begin{tabular}[c]{@{}l@{}}MMD$\downarrow$/\\ $10^{-2}$\end{tabular}} & \multicolumn{2}{c}{\begin{tabular}[c]{@{}c@{}}Flow\\ Matching\end{tabular}} & \multicolumn{3}{c}{\begin{tabular}[c]{@{}c@{}}Consistency\\ Training\end{tabular}} & \begin{tabular}[c]{@{}c@{}}Consistency\\ Distillation\end{tabular} \\ \cmidrule(lr){2-3} \cmidrule(lr){4-6} \cmidrule(lr){7-7} 
 & RFM-100 & RFM-2 & RCD & dRCD & CD$_\text{naive}$ & RCT \\ \midrule
Board & 0.47 & 14.04 & \underline{0.61} & \textbf{0.37} & 3.49 & 1.89 \\
Protein & 0.35 & 15.19 & \underline{3.84} & \textbf{3.79} & 8.40 & 5.54 \\
RNA & 0.97 & 14.03 & 7.82 & \textbf{3.39} & 10.44 & \underline{5.60} \\ \bottomrule
\end{tabular}
}
% \end{table}
\vspace{-.5em}
\end{wraptable}
The MMD results are presented in Table~\ref{tab:torus}, with additional visualizations in Appendix~\ref{supp:visualization}. RFM with 100 sampling steps still achieves the best performance, while its performance is poor with only two sampling steps. Since the torus is flat, $\mathrm{d}(\exp_x)_u,\mathrm{d}(\exp u)_{x}$ are identity mappings such that the simplified loss is precisely equivalent to the original loss. Surprisingly, the discrete version of RCD achieves decent results. We argue that this might be because the flat torus is more similar to a Euclidean space and less prone to numerical issues compared to other manifolds. Finally, the trivial Euclidean consistency model CD$_\text{naive}$ ignores the periodic nature of the torus, leading to boundary issues and significantly underperforming RCMs.

\subsection{3D Rotation Group}
For the generative modeling on the 3D Rotation Group $\mathrm{SO}(3)$, we follow \cite{liu2023delving} to use synthetic rotation datasets with three different modes (Cone, Fisher, Line) on $\mathrm{SO}(3)$. We additionally include a more challenging dataset that projects the 2D Swiss roll onto the $\mathrm{SO}(3)$ manifold. For each dataset, 100k samples are generated for training the Riemannian flow matching and consistency models. During evaluation, 10k rotations are sampled for each model for MMD calculation.

\begin{table}[ht]
\centering
\caption{Maximum mean discrepancy (MMD) scores on the $\mathrm{SO}(3)$ datasets. Except for RFM-100, the best results are in \textbf{bold} and the second best are \underline{underlined}.}\label{tab:so3}
\small
\begin{tabular}{@{}lcccccccc@{}}
\toprule
\multirow{2}{*}{MMD$\downarrow$/$10^{-2}$} & \multicolumn{2}{c}{Flow Matching} & \multicolumn{4}{c}{Consistency Distillation} & \multicolumn{2}{c}{Consistency Training} \\ \cmidrule(lr){2-3} \cmidrule(lr){4-7} \cmidrule(lr){8-9} 
 & RFM-100 & RFM-2 & sRCD & RCD & dRCD & CD$_\text{naive}$ & sRCT & RCT \\ \midrule
Swiss Roll & 1.35 & 19.64 & \underline{1.51} & \textbf{1.47} & 8.69 & 2.75 & 4.17 & 8.23 \\
Cone & 7.38 & 19.96 & \underline{5.47} & 6.30 & 20.39 & 21.46 & 7.53 & \textbf{3.78} \\
Fisher & 4.02 & 17.41 & \underline{5.81} & \textbf{5.71} & 16.41 & 6.87 & 8.59 & 7.00 \\
Line & 4.87 & 15.50 & \underline{3.06} & \textbf{2.39} & 14.93 & 9.36 & 3.75 & 3.32 \\
% Peak/$10^{-3}$ & 0.49 & 0.28 & \textbf{0.28} & 0.40 & 0.57 & 0.49 & \textbf{0.28} & 0.40 \\ 
\bottomrule
\end{tabular}
\end{table}

The quantitative results of MMD on $\mathrm{SO}(3)$ are summarized in Table~\ref{tab:so3}. Again, the simplified version of RCD and RCT achieves similar performance to the exact one. Noticeably, the discrete-time RCD performs worse on $\mathrm{SO}(3)$, probably due to larger discretization errors on the curve geometry. The naive Euclidean CD also falls short in terms of MMD, even though the 3-vector representation allows for arbitrary rotation vectors as the output.
Interestingly, the RCT model on the Cone dataset achieved the best MMD score, even better than the RCD counterpart. This is probably because the pre-trained RFM in this case is not as good as the other datasets, as demonstrated by a relatively high MMD for the RFM-100 model. In this way, the learned marginal vector field may not be accurate enough, leading to worse performance on the distillation approaches. The RCT model, on the other hand, does not suffer from this limitation, as it relies on the conditional vector fields instead.

\subsection{Ablation Study}
Scalability to higher-dimensional Riemannian manifolds and sampling efficiency are two crucial aspects for the generalization of RCM and other baseline models. To demonstrate scalability, we follow the setup in \cite{lipman2022flow} on high-dimensional tori. The target distribution is a wrapped standard Gaussian distribution centered at the original, with the torus defined on $[0,2\pi]^D$. Therefore, the high-density regions cover every corner of the hypercube, making it necessary to enforce the manifold constraint (periodic boundary condition). We estimate the Gaussian parameters based on maximum likelihood estimation, and calculate the Fréchet distance to the ground truth. 5k points are sampled as the training dataset, and the same number of samples is generated for parameter estimation.
The results are shown in Table~\ref{tab:scale}, where RCT consistently outperforms RFM and the naive Euclidean CM in the few-step setup for a manifold dimension up to 128. Specifically, we noted that the performance of CT$_\text{naive}$ drastically degrades on higher-dimensional manifolds. We hypothesize it is because the increasing manifold dimension makes it exponentially harder for the Euclidean model to comply with the manifold constraint, as the number of constraints grows exponentially. Our results demonstrate the scalability of RCM, further necessitating the need to respect manifold properties in consistency models. 

\begin{table}[ht]
\centering
\caption{Fréchet distance (lower is better) between the ground truth Gaussian parameters and those estimated from the generations on high-dimensional tori. All samples are generated using 2 steps.}\label{tab:scale}
\begin{tabular}{@{}lccccccc@{}}
\toprule
Torus dim & 2 & 4 & 8 & 16 & 32 & 64 & 128 \\ \midrule
RFM & 0.52 & 0.70 & 1.01 & 1.41 & 1.95 & 1.47 & 1.83 \\
RCT & \textbf{0.22} & \textbf{0.31} & \textbf{0.54} & \textbf{0.81} & \textbf{0.46} & \textbf{0.58} & \textbf{0.62} \\
CT$_\text{naive}$ & 0.73 & 2.69 & 1.58 & 2.16 & 2.41 & 24.80 & 35.16 \\ \bottomrule
\end{tabular}
\end{table}

We also provide a comprehensive study on the impact of the number of sampling steps on the generation quality for different models in Appendix~\ref{supp:nfe}, where RCM variants consistently outperform the naive Euclidean baseline across all NFEs. The empirical benchmark on the sampling stage speedup is provided in Appendix~\ref{supp:time}.

\section{Related Work}

\paragraph{Riemannian Generative Models.}
Modeling data distributions beyond Euclidean space is essential for applications like protein modeling \citep{yim2023se} and geosciences \citep{karpatne2018machine}. 
\citep{de2022riemannian,huang2022riemannian} successfully extend diffusion models \citep{ho2020denoising,song2021scorebased} to Riemannian manifolds that learn to reverse the noising process.
% However, these methods often require expensive divergence computation or approximation of the heat kernel.
\cite{lou2023scaling} introduced practical improvements for Riemannian diffusion models for certain Riemannian symmetric spaces to enhance scalability. 
\cite{jo2024riemannian} tackled scalability by employing bridge processes for generative diffusion on Riemannian manifolds. Another approach involves extending continuous normalizing flows \citep{chen2018neural,grathwohl2018ffjord,lipman2022flow} to manifolds. Early attempts leverage mapping between manifolds and Euclidean spaces \citep{gemici2016normalizing} or focused on simple manifolds using maximum likelihood training \citep{mathieu2020riemannian}.
\citep{rozen2021moser} proposed simulation-free training for continuous flows, though they scaled mainly to high dimensions. \citep{chen2024flow} expanded flow matching \citep{lipman2022flow,liu2022rectified} and achieved simulation-free training for simple geometries. \citep{sorrenson2024learning} introduced a one-step free-form flow generator for manifolds. Despite these advances, the development of a one-step Riemannian Generator remains largely unexplored, motivating our research in this area.

\paragraph{One-Step Generator.}
% One-step generators like generative adversarial networks \citep{goodfellow2014generative} are favored due to their fast inference speed. 
Diffusion models, also known as score-based generative models \citep{ho2020denoising,song2019generative,albergo2023stochastic,song2021scorebased}, along with flow models \citep{lipman2022flow}, have seen substantial success in various fields. However, these models typically necessitate hundreds of functional evaluations to ensure high-quality outputs. To address this, recent research has focused on reducing the number of function evaluations (NFEs) by adopting advanced solvers \citep{lu2022dpm,zhang2023fast}, introducing better prior-data couplings \citep{liu2023flow,tong2023improving}, and leveraging progressive distillation \citep{meng2023distillation,salimans2022progressive}. 
There is growing interest in developing one-step generators capable of producing samples with a single function evaluation. Notably, some studies have successfully distilled multi-step diffusion models into single-step student generators \citep{luo2023diff,yin2024one,xu2025one,sauer2024adversarial}, often drawing from pre-trained teacher models. Another emerging approach involves consistency models \citep{song2023consistency,yang2024consistency,lu2024simplifying}, which learn a consistency function that maps noisy data along an ODE trajectory to the corresponding clean data. These models can be trained either by distillation or from scratch. The idea of ``consistency'' inspires subsequent works such as shortcut models \citep{frans2024one}.

\section{Conclusion}\label{sec:conclusion}
In this work, we propose the Riemannian consistency model (RCM) as an extension to the existing consistency model on Euclidean domains. We have provided an explicit closed-form continuous-time RCM objective with manifold-specific operations. Utilizing the marginalization trick on the Riemannian manifold, we rigorously prove that Riemannian consistency training (RCT) is mathematically equivalent to Riemannian consistency distillation (RCD). We further propose a simpler objective without the need for the explicit form of the differentials of the exponential map while still achieving similar and even better empirical performance. We provide a novel perspective from kinematics that offers an intuitive interpretation of the RCM objective as geometry-aware motions.

Our RCM, together with its CD and CT variants, achieved significantly better generation quality than the vanilla flow matching and Euclidean consistency model in the few-step generation setup, demonstrating the necessity of incorporating intrinsic geometric information in the model design. We will continue to explore the possibility of scaling up Riemannian flow matching and hope to inspire more efficient flow model architectures in various downstream domains like protein design.

\section*{Acknowledgement}
This work was funded by the DOE Center for Advanced Bioenergy and Bioproducts Innovation (U.S. Department of Energy, Office of Science, Biological and Environmental Research Program under Award Number DE-SC0018420).

%%%%%%%%% REFERENCES
% \bibliographystyle{abbrv}
% \bibliography{ref}
\bibliographystyle{plainnat}
\bibliography{ref}

% \clearpage  
% \newpage
% \input{secs/7_checklist}

\clearpage  
\newpage
\appendix
\centerline{\Large\bf Supplementary Material}

\section{Riemannian Geometry} \label{supp:infogeo}
In this section, we give a more detailed mathematical background on Riemannian geometry. A more comprehensive background on the Riemannian manifold can be found in standard mathematical textbooks like \cite{gallot2004riemannian}.

A Riemannian manifold $\mathcal{M}$ is a real, smooth manifold equipped with a positive definite inner product $g$ on the tangent space $T_x\mathcal{M}$ at each point $x\in\mathcal{M}$. Let $T\mathcal{M}=\bigcup_{x\in\mathcal{M}}\{x\}\times T_x\mathcal{M}$ be the \emph{tangent bundle} of the manifold $\mathcal{M}$, a time-dependent \emph{vector field} on $\mathcal{M}$ is a mapping $u_t:[0,1]\times\mathcal{M}\to T\mathcal{M}$ where $u_t(x)\in T_x\mathcal{M}$. A \emph{geodesic} is a locally distance-minimizing curve on the manifold. The existence and the uniqueness of the geodesic state that for any point $x\in\mathcal{M}$ and for any tangent vector $u\in T_x\mathcal{M}$, there exists a unique geodesic $\gamma:[0,1]\to\mathcal{M}$ such that $\gamma(0)=x$ and $\dot\gamma(0)=u$. The \emph{exponential map} $\exp:\mathcal{M}\times T\mathcal{M}\to\mathcal{M}$ is uniquely defined to be $\exp_x(u):=\gamma(1)$. The \emph{logarithm map} $\log:\mathcal{M}\times \mathcal{M}\to T\mathcal{M}$ is defined as the inverse mapping of the exponential map such that $\exp_x(\log_x y)\equiv y,\forall x,y\in \mathcal{M}$.

Let $\Gamma(T\mathcal{M})$ denote the space of vector fields on $\mathcal{M}$, a \emph{covariant derivative} or \emph{affine connection}
\begin{equation}
\begin{aligned}
    \nabla:\Gamma(T\mathcal{M})\otimes \Gamma(T\mathcal{M})&\to\Gamma(T\mathcal{M}),\\
    (u,v)&\mapsto \nabla_u v
\end{aligned}
\end{equation}
is a bilinear map on $\mathcal{M}$ that satisfies:
\begin{enumerate}
    \item $\nabla$ is tensorial in the first argument:
    \begin{align}
        \nabla_{u_1+u_2}v&=\nabla_{u_1}v+\nabla_{u_2}v,\quad\forall u_1,u_2,w\in\Gamma(T\mathcal{M}),\\
        \nabla_{fu}v&=f\nabla_{u}v,\quad\forall f\in C^\infty(\mathcal{M}),u,v\in\Gamma(T\mathcal{M}) .
    \end{align}
    \item $\nabla$ is $\mathbb{R}$-linear in the second argument:
    \begin{equation}
        \nabla_{u}(v_1+v_2)=\nabla_{u}v_1+\nabla_{u}v_2,\quad\forall u,v_1,v_2\in\Gamma(T\mathcal{M}),
    \end{equation}
    and it satisfies the Leibniz rule:
    \begin{equation}
        \nabla_{u}(f v)=v(f) u+f \nabla_{u} v,\quad\forall f\in C^\infty(\mathcal{M}),u,v\in\Gamma(T\mathcal{M}) .
    \end{equation}
\end{enumerate}
The concept of the affine connection generalizes the idea of directional derivatives in the Euclidean case and allows for the differentiation of vector fields at different points. 
Consider a smooth curve $\gamma_t$, a vector field $u$ is said to be \emph{parallel-transported} along the curve $\gamma$ if $\nabla_{\dot \gamma}u=0$. Intuitively, such a condition indicates the vector field $u$ remains ``constant'' along the curve $\gamma$. We call the mapping $\Pi_{x_0,x_t;\gamma}:T_{x_0}\mathcal{M}\to T_{x_t}\mathcal{M},u_0\mapsto u_t$ the \emph{parallel transport} of the tangent vector $u_0$ along the curve $\gamma$. Furthermore, a curve $\gamma$ is said to be \emph{autoparallel} if $\nabla_{\dot \gamma}\dot \gamma=0$. Intuitively, a point along such a curve travels at ``constant'' speed as evaluated by the affine connection. 

A \emph{Levi-Civita connection} is the unique affine connection that is torsion-free and metric compatible $\nabla g=0$. The covariant derivative can be explicitly expanded as 
\begin{equation}
    \nabla_{u}v=\dot v^k+\Gamma^k_{ij} v^i u^j,\label{eqn:parallel_transport}
\end{equation}
where $\Gamma^k_{ij}$ are the Christoffel symbols that can be calculated explicitly using the Riemannian metric $g_{ij}$ and the inverse metric $g^{ij}$ as:
\begin{equation}
    \Gamma^k_{ij}=\frac{1}{2}g^{km}(\partial_i g_{mj}+\partial_j g_{mi}-\partial_m g_{ij}).
\end{equation} 
In this way, the Christoffel symbols serve as the geometry-aware terms that correct the time derivative of the vector field in the Euclidean case.
One important property of the Levi-Civita connection is that every geodesic $\gamma$ is autoparallel:
\begin{equation}
    \nabla_{\dot\gamma}\dot\gamma=0. \label{eqn:geodesic_eqn}
\end{equation}
In this way, the geodesic equation in Eq.\ref{eqn:geodesic_eqn} can be written locally as:
\begin{equation}
    \ddot{\gamma}^k+\Gamma^k_{ij}\dot\gamma^i\dot\gamma^j=0,\quad k=1,\dots,n .
\end{equation}

The \emph{differentials of the exponential map} $\mathrm{d}(\exp_x)_u,\mathrm{d}(\exp u)_{x}:T_x\mathcal{M}\to T_{y}\mathcal{M},y=\exp_x u$ are linear mappings between tangent spaces. For $\mathrm{d}(\exp_x)_u$, we fix $x$ and evaluate the differential with respect to the tangent vector at $u$; for $\mathrm{d}(\exp u)_{x}$, we fix $u$ and evaluate the differential with respect to the base point at $x$. Intuitively, these differentials describe how a small change at $x$ will affect the change at $y=\exp_x u$. Note that, from the parallel transport equation in Eq.\ref{eqn:parallel_transport}, we can obtain the approximation of infinitesimal parallel transport along the direction of $\mathrm{d}x$ as
\begin{equation}
    \mathrm{d}u^k= -\Gamma^k_{ij} u^i (\mathrm{d}x)^j.
\end{equation}
Such a result indicates that, for non-flat Riemannian manifolds with non-zero Christoffel symbols, there exists an acceleration for enforcing the manifold constraint even if the vector field is ``constant'' (more precisely, parallel transported along the curve). This is especially important for the calculation of $\mathrm{d}(\exp u)_{x}$, where an additional term involving $\mathrm{d}(\exp_x)_u$ needs to be added to account for the extrinsic change in the vector field because of the base point change.

\section{Geometry on Specific Riemannian Manifold}\label{supp:manifold}
In this section, we further provide additional information on the geometric properties of the Riemannian manifolds used in this work.

\subsection{Euclidean Space and Flat Torus}
The Euclidean space with the canonical Euclidean inner product forms a Riemannian structure. As expected, the geodesic distance is simply the Euclidean distance. As the manifold is flat, the exponential and logarithm maps simply read:
\begin{align}
    \exp_x u&=x+u,\\
    \log_x y&=y-x.
\end{align}
All Christoffel symbols vanish on the Euclidean space, so the covariant derivative $\nabla_{\dot x} v=\dot v$ reduces to the ordinary time derivative. Direct calculation gives $\mathrm{d}(\exp_x)_u(v)=\mathrm{d}(\exp u)_{x}(v)=v$ are the identity map. This makes sense as the tangent vectors can be parallel transported freely anywhere in the flat space.

The flat torus $\mathbb{T}^n=(S^1)^n$ can be viewed as the quotient manifold of the Euclidean space by identifying $x+2k\pi\sim x$ for $k\in\mathbb{Z}$ along each dimension. Therefore, it is also a flat manifold that inherits most properties of the Euclidean space. The only difference is that, when calculating the exponential and logarithm maps, we shall follow the minimum-image convention along each dimension and wrap the results back to the interval of $[0,2\pi]$.

\subsection{Spherical Manifold}
The $n$-sphere $S^n$ is an $n$-dimensional Riemannian manifold that inherits the canonical inner product from $\mathbb{R}^{n+1}$ as
\begin{equation}
    \langle u,v\rangle_S=\langle u,v\rangle=\sum_{i=1}^n u_i v_i,\quad u,v\in T_x S^n .
\end{equation}
The tangent space $T_xS^n=\{u|\langle u,x\rangle=0\}$ is a $n$-dimensional hyperplane perpendicular to the vector $x$. The geodesic on the sphere follows the great circle between two points, and the geodesic distance can be calculated as
\begin{equation}
    d_S(x,y)=\arccos\langle x,y\rangle.
\end{equation}
The exponential and logarithm maps can be calculated as:
\begin{align}
    \exp_x u&=x\cos\|u\|+\frac{u}{\|u\|}\sin\|u\|,\label{eqn:sphere_exp}\\
    \log_x y&=\frac{\arccos(\langle x,y\rangle)}{\sqrt{1-\langle x,y\rangle^2}}(y-x-\langle x,y-x\rangle x).
\end{align}

The embedded Christoffel symbols read $\Gamma^k_{ij}=x^k\delta_{ij}$, and the covariant derivative can be calculated as $\nabla_{\dot x} v=\dot v+\langle v, \dot x\rangle x$. This indicates that a small perturbation $\mathrm{d}x$ on the point will lead to a change of $-\langle v, \mathrm{d}x\rangle x$ on the tangent vector $v$.

Direct differentiation on Eq.\ref{eqn:sphere_exp} gives:
\begin{equation}
    \mathrm{d}(\exp_x)_u(v)=v_\|\cos\|u\| +\frac{\sin\|u\|}{\|u\|}(v_\perp-\langle u,v\rangle x),
\end{equation}
where $v_\|=\langle u,v\rangle u/\|u\|^2$ is the parallel component of $v$ with respect to $u$ and $v_\perp=v-v_\|$ is the orthogonal component. Similarly, for $\mathrm{d}(\exp u)_{x}$ we have:
\begin{equation}
    \mathrm{d}(\exp u)_x(v)=v\cos\|u\| -\frac{\sin\|u\|}{\|u\|}\langle u,v\rangle x.
\end{equation}
The first term is the direct differentiation result. The second term is exactly $\mathrm{d}(\exp_x)_u(-\langle u, v\rangle x)$, which arises because, as mentioned above, a change in $x$ will lead to an additional change in $u$ in the covariant derivative. One can verify that, for any $v\in T_xS^n$, we have $\langle \mathrm{d}(\exp_x)_u(v),\exp_x u\rangle=\langle \mathrm{d}(\exp u)_x(v),\exp_x u\rangle=0$.

\subsection{3D Rotation Group}

The 3D rotation group $\mathrm{SO}(3)$ is a 3-dimensional Riemannian manifold with the Lie group structure. Usually, it can be considered as the group of all 3D rotation matrices $\mathrm{SO}(3)=\{R\in\mathbb{R}^{3\times 3}:R^\top R=I, \det(R)=1\}$ \cite{yim2023fast,bose2023se}, with it Lie algebra $\mathfrak{so}(3)$ consisting of all 3D skew-symmetric matrics $\mathfrak{so}(3)=\{A\in\mathbb{R}^{3\times 3}:A=-A^\top\}$. However, it is worth noting that other formalisms can also be adopted, e.g., the quaternion representation \cite{yue2025reqflow}. In this work, we use the 3-vector representation (i.e., rotation vectors or axis-angle representations) for easy derivative calculation. 
Mathematically, define the vee-hat isomorphism as
\begin{align}
\hat{(\cdot)}:\mathbb{R}^3\to\mathfrak{so}(3),
\left(\begin{array}{l}
a_{1} \\
a_{2} \\
a_{3}
\end{array}\right)\mapsto
\left(\begin{array}{ccc}
0 & -a_{3} & a_{2} \\
a_{3} & 0 & -a_{1} \\
-a_{2} & a_{1} & 0
\end{array}\right),\\
\check{(\cdot)}:\mathfrak{so}(3)\to \mathbb{R}^3,
\left(\begin{array}{ccc}
0 & -a_{3} & a_{2} \\
a_{3} & 0 & -a_{1} \\
-a_{2} & a_{1} & 0
\end{array}\right)\mapsto
\left(\begin{array}{l}
a_{1} \\
a_{2} \\
a_{3}
\end{array}\right).
\end{align}
Then, the 3-vector representations are related to rotation matrices via the following Lie exponential map $\Exp:\mathfrak{so}(3)\to\mathrm{SO}(3)$ and Lie logarithm map $\Log:\mathrm{SO}(3)\to\mathfrak{so}(3)$:
\begin{align}
    R&=\Exp (\hat{\theta})=I+\frac{\sin\|\theta\|}{\|\theta\|}\hat\theta+\frac{1-\cos\|\theta\|}{\|\theta\|^2}\hat\theta^2,\\
    \hat\theta&=\Log(R)=\frac{\|\theta\|}{2\sin\|\theta\|}(R-R^\top),\|\theta\|=\arccos\frac{\Tr(R)-1}{2}.
\end{align}

For simplicity, we will omit the vee-hat isomorphism notation and allow the Lie exponential map to apply to a 3-vector $R=\Exp\theta$ as if the hat has been applied, and also allow the Lie logarithm map to return a 3-vector $\theta=\Log R$ as if the vee has been applied. The Riemannian exponential and logarithm maps $\exp,\log:\mathbb{R}^3\times\mathbb{R}^3\to \mathbb{R}^3$ can be expressed using the Lie exponential and logarithm maps as
\begin{align}
    \exp_x u&=\Log(\Exp x \Exp u),\\
    \log_x y&=\exp_{(-x)} y=\Log(\Exp (-x)\Exp y).
\end{align}
Here, we follow the body-frame convention for the rotation vectors such that the tangent vector $u$ can be interpreted as the local angular velocity relative to the global frame. Using the 3-vector, the canonical bi-invariant Riemannian metric is given by $g(u,v)=\langle u,v\rangle$, which is equivalent to $g(A,B)=\frac{1}{2}\Tr(A^\top B),A,B\in \mathfrak{so}(3)$. The geodesic distance then reads
\begin{equation}
    d(x,y)=\|\log_x y\|=\left\|\Log(\Exp (-x)\Exp y)\right\|.
\end{equation}

The Christoffel symbols $\Gamma^k_{ij}=\varepsilon_{kij}$ are the Levi-Civita symbols, and the covariant derivative can be calculated as $\nabla_{\dot x} v=\dot v+\frac{1}{2} [\dot x,v]$, where for 3-vector representations, the Lie bracket becomes the cross product $[u,v]=u\times v$.

For the calculation of the differentials, the left- and right-Jacobians $J_L(\theta),J_R(\theta)$ relate small perturbations in the 3-vectors. They are defined as:
\begin{align}
    J_L(\theta):=\sum_{n=0}^\infty\frac{1}{(n+1)!}\left(\hat\theta\right)^n,\quad J_R(\theta):=\sum_{n=0}^\infty\frac{1}{(n+1)!}\left(-\hat\theta\right)^n.
\end{align}
Specifically, we have $J_R(\theta)=J_L(-\theta)=J_L(\theta)^\top=R^\top J_L(\theta)$, where $R=\Exp(\theta)$. The Baker-Campbell-Hausdorff (BCH) formula on $\mathrm{SO}(3)$ reads:
\begin{align}
    \Exp(\theta+\mathrm{d}\theta)&\approx \Exp\theta\Exp(J_R(\theta)\diff \theta),\\
    \Exp(\theta+\mathrm{d}\theta)&\approx \Exp(J_L(\theta)\diff \theta)\Exp\theta,\\
    \Log(\Exp\theta \Exp \mathrm{d}\theta)&\approx\theta+J_R^{-1}(\theta)\diff \theta,\label{eqn:right_bch}\\
    \Log(\Exp \mathrm{d} \theta\Exp\theta)&\approx\theta+J_L^{-1}(\theta)\diff \theta.\label{eqn:left_bch}
\end{align}
With the BCH formula, we can obtain the following approximations of the Riemannian exponential and logarithm maps:
\begin{align}
    \exp_{x}(\mathrm{d}x)&\approx x+J_R^{-1}(x)\mathrm{d}x,\label{eqn:exp_approx}\\
    \log_y (y+\mathrm{d}y)&\approx J_R(y)\mathrm{d}y,\label{eqn:log_approx}\\
    \exp_x(u+\mathrm{d}u)-\exp_x u&\approx J_R^{-1}(y)J_R(u)\mathrm{d}u,y=\exp_x u,\label{eqn:dexpx_approx}\\
    \exp_{x+\mathrm{d}x}(u)-\exp_x u&\approx J_L^{-1}(y)J_L(x)\mathrm{d}x,y=\exp_x u.\label{eqn:dexpu_approx}
\end{align}
Combining approximations in Eq.\ref{eqn:log_approx} and \ref{eqn:dexpx_approx}, we have 
\begin{equation}
    \mathrm{d}(\exp_x)_u(v)=J_R(u)v.
\end{equation}
The calculation of $\mathrm{d}(\exp u)_x$ is more tricky, as one need to combine Eq.\ref{eqn:exp_approx}, \ref{eqn:log_approx}, and \ref{eqn:dexpu_approx}. The direct perturbation term can be calculated as:
\begin{align}
    J_R(y)J_L^{-1}(y)J_L(x)J_R^{-1}(x)\diff x&=J_R(y)\left(J_R^{-1}(y)R_y^\top\right)\left(R_xJ_R(x)\right)J_R^{-1}(x)\diff x\\
    &=R_y^\top R_x\diff x=(R_u^\top R_x^\top) R_x\mathrm{d}x=R_u^\top\diff x,
\end{align}
where $R_u$ denotes the rotation by the rotation vector $u$. Therefore, combine the covariant derivative term, we obtain:
\begin{equation}
    \mathrm{d}(\exp u)_x(v)=R_u^{-1}v-\frac{1}{2}J_R(u)[v,u],
\end{equation} 
The time derivative of the model using the Jacobian-vector product (JVP) should also be modified as:
\begin{equation}
    \frac{\mathrm{d}v}{\mathrm{d}t}=\frac{\partial v}{\partial x}J_R^{-1}(x)\dot x+\frac{\partial v}{\partial t},
\end{equation}
where the additional left inverse Jacobian comes from the expansion of the reference vector field $\dot x$ using the approximation relation in Eq.\ref{eqn:exp_approx}. 

For any $v \in\mathbb{R}^3$, the Jacobians and their inverse can be calculated in closed form as:
\begin{align}
    J_L(u)v=J_R(-u)v&=v+\frac{1-\cos\|u\|}{\|u\|^2}[u,v]+\frac{\|u\|-\sin\|u\|}{\|u\|^3}[u,[u,v]],\\
    J_L^{-1}(u)v=J_R^{-1}(-u)v&=v-\frac{1}{2}[u,v]+\left(\frac{1}{\|u\|^2}-\frac{1+\cos\|u\|}{2\|u\|\sin\|u\|}\right)[u,[u,v]].
\end{align}
For any $v \in\mathbb{R}^3$, the application of a rotation vector can be calculated using the vector-form Rodrigues' formula:
\begin{equation}
    R_u v=v+\frac{\sin\|u\|}{\|u\|}[u,v]+\frac{1-\cos\|u\|}{\|u\|^2}[u,[u,v]].
\end{equation}

We elaborate why the formulae for $\mathrm{SO}(3)$ will incur additional terms and transformations. Specifically, one may notice a seemingly contradiction between the Taylor expansion we used in our proof of Theorem~\ref{thm:rcm} of $\log_x y\approx y-x$ and the expansion for $\mathrm{SO}(3)$ in Eq.\ref{eqn:log_approx}. This is because we use a non-canonical representation for the rotations (\emph{rotation vectors} instead of \emph{rotation matrices}), effectively adding an additional coordinate transformation that leads to the Jacobians. By definition, a smooth Riemannian manifold resembles Euclidean space everywhere locally; therefore, the first-order Taylor expansion of the exponential map $\exp_x u\approx x+u$ always holds. However, by considering a smooth coordinate mapping $\varphi:\mathcal{M}\to\mathbb{R}^n$, the Jacobian of such a map $J_\varphi: T\mathcal{M}\to\mathbb{R}^n$ needs to be composed during the derivative calculation. Furthermore, for a non-Abelian Lie group, one must define both the left- and right-Jacobians. Indeed, the above procedure of calculating the Jacobians can be generalized to arbitrary smooth mappings $\varphi$, and the other canonical coordinates we used for the sphere and torus can be thought of as adopting the identity mapping with the identity Jacobian $J_\varphi=\mathrm{id}$. Under the assumption of a canonical coordinate, our theoretical result in Theorem~\ref{thm:rcm} safely holds for all smooth Riemannian manifolds.

\section{Experimental Detail}\label{supp:experiment}
In this section, we provide comprehensive details on the model parameterization, training setup, and additional techniques we used to improve the performance of RCM.

\subsection{Model Parameterization}

We found that the magnitude-preserving design principle of EDM2~\citep{karras2024analyzing} is very effective in stabilizing the JVP computation. 
However, since our task and datasets are relatively small (compared to the image generation task), we rebuilt a simple magnitude-preserving multilayer perceptron (MLP) using magnitude-preserving EDM2 modules.

\paragraph{Magnitude-preserving fully-connected layer}
A fully-connected layer with input activation $\mathbf{x}$ and output activation is calculated as follows:
\begin{equation}
    \text{MP-FC}(x) = \frac{w}{\|w\|_2}x
\end{equation}

\paragraph{Magnitude-preserving Fourier features}
The Fourier features are scaled by $\sqrt{2}$ using a cosine function as follows:
\begin{equation}
\text{MP-Fourier}(t) = \begin{bmatrix} \sqrt{2} \cos(2\pi(f_1 t + \varphi_1)) \\ \sqrt{2} \cos(2\pi(f_2 t + \varphi_2)) \\ \vdots \\ \sqrt{2} \cos(2\pi(f_N t + \varphi_N)) \end{bmatrix}
\end{equation}
\paragraph{Magnitude-preserving SiLU}
The SiLU nonlinear function should also be scaled to maintain the magnitude as follows:
\begin{equation}
\text{MP-SiLU}(x) = \frac{x}{0.596\cdot(1+e^{-x})}
\end{equation}
\paragraph{Magnitude-preserving Concatenation and Sum}
The concatenation of two input activations $x$ and $y$, scaled by constants $\omega_x$ and $\omega_y$, with a blend factor $a \in [0,1]$ to adjust the balance between $x$ and $y$ is calculated as follows:
\begin{equation}
\text{MP-Cat}(x, y, a) = \sqrt{\frac{N_x + N_y}{(1-a)^2 + a^2}} \cdot \left[ \frac{1-a}{\sqrt{N_x}}x \oplus \frac{a}{\sqrt{N_y}}y \right]
\end{equation}
where, $N_x$ and $N_y$ is the size of $\mathbf{x}$ and $\mathbf{y}$ in concatenation dimension. A similar formula is used for the sum operation as follows:
\begin{equation}
\text{MP-Sum}(x,y, a) = \frac{(1-a)x+ay}{\sqrt{(1-a)^2 + a^2}}
\end{equation}

\paragraph{Magnitude-preserving Block}
By using the above operations, we design a simple multilayer perceptron and residual connections to fuse noisy data and time embedding. Concretely, for inputs $x$ and $\mathrm{emb}$, we perform the following computation:
\begin{align}
    h_x &= \text{MP-FC}(\text{MP-SiLU}(\text{Norm}(x))) \\
    h_{\mathrm{emb}} &= \text{MP-FC}(\mathrm{emb}) + 1 \\
    y &= \text{MP-Sum}(\text{Norm}(x), \text{MP-FC}(h_x) \cdot h_{\mathrm{emb}}),0.3)
\end{align}

\paragraph{Model Architectural}
We use MP-Blocks to build our model. First, we stack the blocks in the encoder and gradually increase the hidden dimension. Subsequently, in the decoder, we use the same blocks and concatenate the output of the corresponding layer of the encoder, similar to what UNet does. The time $t$ is encoded using MP-Fourier and passed through an MP-FC, which is subsequently fed into each block.

\paragraph{Hyperparameter}
In all our experiments, we use 4 blocks, each with dimensions [256, 512, 512, 256], and the dimension of time embedding is 256. For flow matching, we use a learning rate of $10^{-3}$, while for the consistency model, we use a learning rate of $10^{-4}$ with the Adam optimizer. The batch size varies depending on the dataset size and is typically 512 or 1024. We do not use dropout, nor do we employ other tricks such as learning rate decay. We train our model using a total of 50 million data samples. 

All experiments were carried out on a single A100. The maximum GPU memory consumed is only around 5GB, which can be easily fit into GPUs with smaller memory. As we fixed the number of iterations, the entire training of each variant of CM models took approximately 4 hours, regardless of the dataset size.

\subsection{Model Sampling}\label{supp:sample}
We described the Riemannian flow matching sampling and our Riemannian consistency model sampling in Algorithm~\ref{alg:fm_sample} and \ref{alg:cm_sample}, respectively. The sampling algorithm for RCM is consistent with \cite{song2023consistency} on Euclidean cases. In each sampling step, the model first makes a prediction of the denoised results with the learned vector field (consistency parameterization in Eq.\ref{eqn:denoise}). A geodesic interpolation step then follows to bring the prediction back to intermediate noisy data if there is more than one sampling step.

\begin{algorithm}[ht]
\caption{RFM Sampling}\label{alg:fm_sample}
\begin{algorithmic}[1]
\State \textbf{Input:} Trained RFM $v_\theta$, number of steps $N$.
\State Sample noise $x_0$.
\For{$t$ \textbf{in} $0,1/N,2/N,\dots,1-1/N$}    
    \State $x_{t+1/N}=\exp_{x_t}(-\dot\kappa_t v_\theta(x_t,t)/N)$. \Comment{Euler step}
\EndFor
\State \textbf{Return:} $x_1$.
\end{algorithmic}
\end{algorithm}  
\begin{algorithm}[ht]
\caption{RCM Sampling}\label{alg:cm_sample}
\begin{algorithmic}[1]
\State \textbf{Input:} Trained RFM $v_\theta$, number of steps $N$.
\State Sample initial noise $x_0$.
\For{$t$ \textbf{in} $0,1/N,2/N,\dots,1-1/N$}    
    \State $x_{1}=\exp_{x_t}(\kappa_t v_\theta(x_t,t))$. \Comment{Consistency parameterization}
    \State Sample noise $x_0$.
    \State $x_{t+1/N}=\exp_{x_1}(\kappa_{t+1/N}\log_{x_1}x_0)$. \Comment{Interpolation to the next time step}
\EndFor
\State \textbf{Return:} $x_1$.
\end{algorithmic}
\end{algorithm}  

\subsection{Training Technique}

\paragraph{Tangent Warmup}
In line with the findings of \citet{lu2024simplifying}, we identified the covariant derivative term in our loss as a likely contributor to JVP instability. Consequently, we implemented a strategy of incrementally adding this term throughout the training process, detailed below:
\begin{equation}
    \dot f=\mathrm{d}(\exp_x)_u\left(\dot\kappa v\right)+\mathrm{d}(\exp u)_x\left(\dot x\right) + r \cdot \mathrm{d}(\exp_x)_u\left(\kappa\nabla_{\dot x} v\right)
\end{equation}
where $r$ linearly increases from 0 to 1 over the first 10k training iterations.
We made an interesting observation, not previously noted in \citet{lu2024simplifying}, regarding the parameter $r$. Specifically, when $r=0$, the loss function degenerates into a flow matching loss. As $r$ progressively increases, the model gradually transitions from a flow matching objective towards that of a consistency model. This progression is analogous to the methodology in ECT~\citep{geng2024consistency}, which involves a reduction of $\mathrm{d}t$ in discrete time steps. This insight offers a unified explanation for these two approaches, despite their different conceptual starting points.

\paragraph{Tangent Clipping}
To further stabilize the gradient variance in consistency models, we build upon the concept of tangent normalization proposed in \citet{lu2024simplifying}. We observed that when the tangent norm is small, standard normalization preserves only directional information, which can impede further optimization. Therefore, drawing inspiration from gradient clipping, we introduce a \emph{tangent clipping} mechanism, detailed below. 
\begin{equation}
    \text{clip}(\dot f)=\dot f \cdot \max(M/\|\dot f\|, 1)
\end{equation}
where $M$ represents the maximum tangent norm, which we set to 10 in our experiments. This approach is designed to maintain low gradient variance without inappropriately rescaling the tangent when its norm is small.

\subsection{Evaluation Details}
For the estimation of kernel density on the 2-sphere, we use the off-the-shelf implementation from Scikit-Learn\footnote{https://scikit-learn.org/} with the haversine distance and the von Mises-Fisher kernel to match the spherical manifold. We follow \cite{mathieu2020riemannian} to choose the bandwidth of 0.02 and use a meshgrid of 90 longitudes and 90 latitudes for calculating the empirical KLD. 
For MMD calculation, we always use a bandwidth of 1 and use the geodesic distance in the exponential kernel to match the corresponding manifold.

For the scalability experiment on high-dimensional tori $[0,2\pi]^D$, we first estimate the (diagonal) wrapped Gaussian using the maximum likelihood estimation (MLE) formula as
\begin{align}
    \hat{\mu}&=\arctan\left(\sum_i\sin x_i \middle/\sum_i\cos x_i\right),\\
    \hat{\sigma}^2&=-\log \left(\frac{N}{N-1}\left(\frac{1}{N}\sum_i\sin x_i\right)^2+\frac{N}{N-1}\left(\frac{1}{N}\sum_i\cos x_i\right)^2-\frac{1}{N-1}\right),
\end{align}
where the index $i$ is over the data points. The above formula is applied to each manifold dimension (assuming a diagonal wrapped Gaussian). We then calculate the Fréchet distance (FD) to the ground truth standard Gaussian as:
\begin{equation}
    \text{FD}^2=\sum_j \hat{\mu}_j^2+(\hat{\sigma}_j-1)^2,
\end{equation}
where the index $j$ is over the manifold dimension.

\section{Additional Results}
In this section, we provide additional experimental results to further support the effectiveness, efficiency, and scalability of RCM. 

\subsection{Impact of NFEs}\label{supp:nfe}
\begin{figure}[htb]
    \centering
    \includegraphics[width=0.6\linewidth]{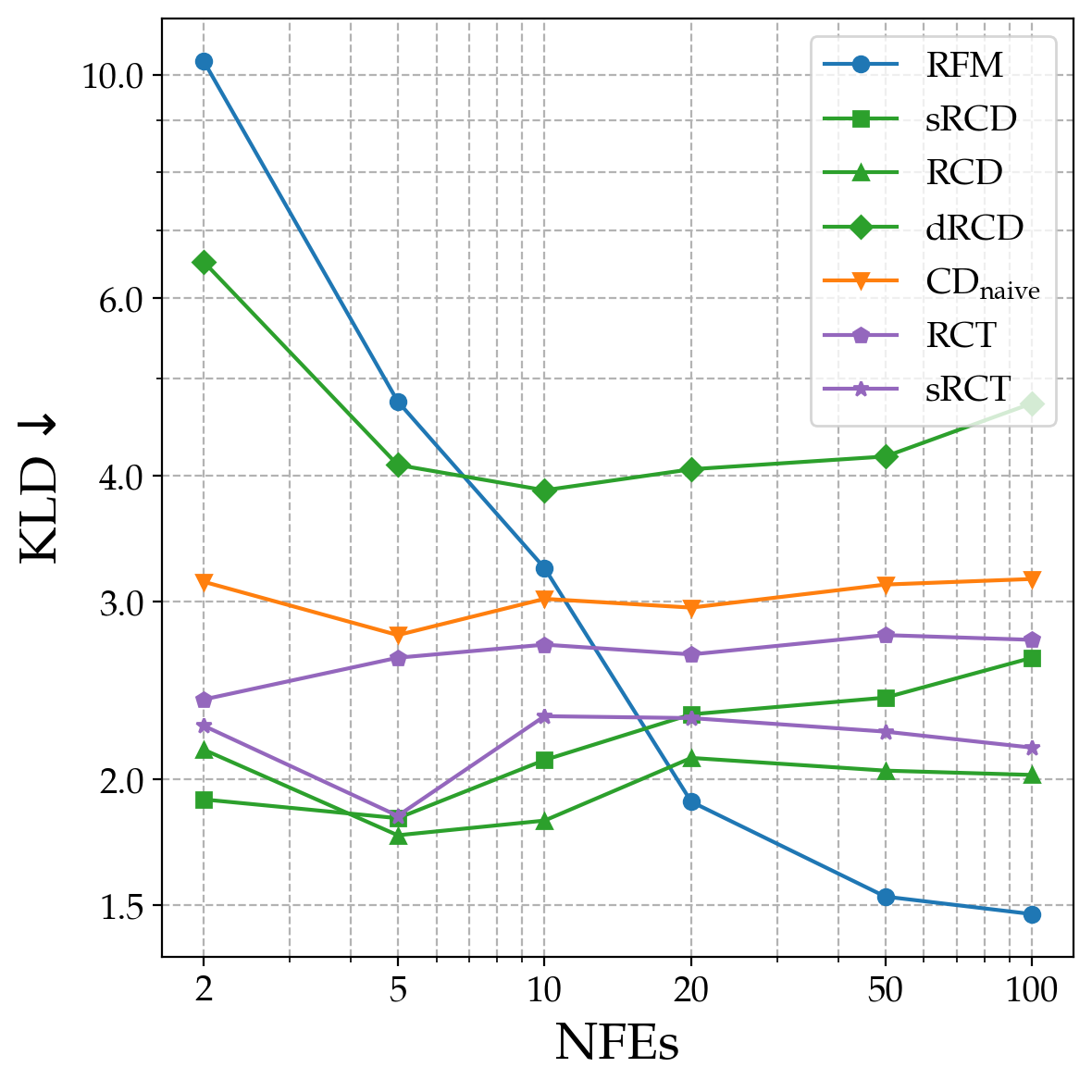}
    \vspace{-.5em}
    \caption{KLD vs NFEs for different models on the Earthquake dataset. RCD variants are colored in green, and RCT variants are colored in purple. }
    \label{fig:nfe}
\end{figure}

We further provide a comprehensive study on the impact of the number of function evaluations (NFEs) on the final generation quality. We use the same model checkpoints trained on the earthquake dataset and alter only the sampling steps in $\{2,5,10,20,50,100\}$ as the NFEs. The results are summarized in Figure~\ref{fig:nfe}. We have the following interesting observations:

\begin{itemize}
    \item For the RFM model, increasing the NFEs has a significant impact on the generation quality, as the learned marginal vector fields are not necessarily straight, leading to significant errors in the few-step generation setup.
    \item RCD and RCT generations are more stable with respect to NFEs, consistently achieving decent generation quality. This indicates RCM can indeed shortcut the probability path on the Riemannian manifold.
    \item RCM variants consistently outperform the baselines, especially the naive Euclidean CM approach, demonstrating the necessity of Riemannian constraints. For RFM, the two lines intersect at around 20 NFEs, below which the RFM generation quality drastically drops.
\end{itemize}

It shall be noted that the speed-quality tradeoff for the standard flow matching model (as demonstrated in the blue RFM curve) does not generally apply to CMs. Our NFE vs performance results generally echo the findings in \citet{kim2023consistency}, in which high NFEs may instead lead to poorer generations. In practice, the improvement in terms of KLD is also less significant for all CMs. It is also possible to adapt the techniques in \citet{kim2023consistency} for Riemannian manifolds, resulting in finer-grained control over the sampling stage, which we leave for future work.

\subsection{Practical Time Complexity and Speedup}\label{supp:time}
Intuitively, as the training and sampling procedure of RCM follows the Euclidean consistency model, the training time should be approximately the same while enjoying a speedup of the ratio of the sampling steps (NFEs) needed. In this way, the 2-step setup will incur a 50x speedup during inference.
To provide more concrete quantitative results, we benchmarked the training stage time of RCM and sRCM compared to the Euclidean version (CD$_\text{naive}$) and the sampling stage speedup of two-step RCM compared to the 100-step RFM. The results are summarized in Table~\ref{tab:time}. 

\begin{table}[hbt]
\centering
\caption{Training time overhead compared to CD$_\text{naive}$ and sampling time speedup for RCM-2 compared to RFM-100 on three different datasets.}\label{tab:time}
\small
\begin{tabular}{@{}lccc@{}}
\toprule
Dataset & Earth & RNA & SO(3) \\ \midrule
RCM train overhead & +5.9\% & +\textless{}0.1\% & +10.7\% \\
sRCM train overhead & +4.1\% & +\textless{}0.1\% & +7.8\% \\ \midrule
RCM-2 sample speedup & ×46.5 & ×48 & ×43.2 \\ \bottomrule
\end{tabular}
\end{table}

For training, it is clear that even with a fairly small model, the additional Riemannian operations have minimal overhead in general. It is also expected that the overhead for SO(3) is larger than the sphere, as the former requires more operations, like the calculation of left or right Jacobians. Similarly, the simplified loss leads to fewer overheads as it does not require the calculation of the differentials of the exponential map. Generally, the overheads for RCm are still within a 10\% range and will be even smaller for larger models, as the Riemannian operators only scale linearly with the data dimension but are model-independent.
For sampling, the speedups are slightly below 50, probably due to the additional Riemannian operators and noising sampling in Algorithm~\ref{alg:cm_sample}. Despite this, the speedup is drastic, demonstrating the effectiveness of RCM variants in few-step generation scenarios and offering a perfect solution for sampling efficiency.

\subsection{Additional Visualization}\label{supp:visualization}
\begin{figure}[htb]
    \centering
    \includegraphics[width=\linewidth]{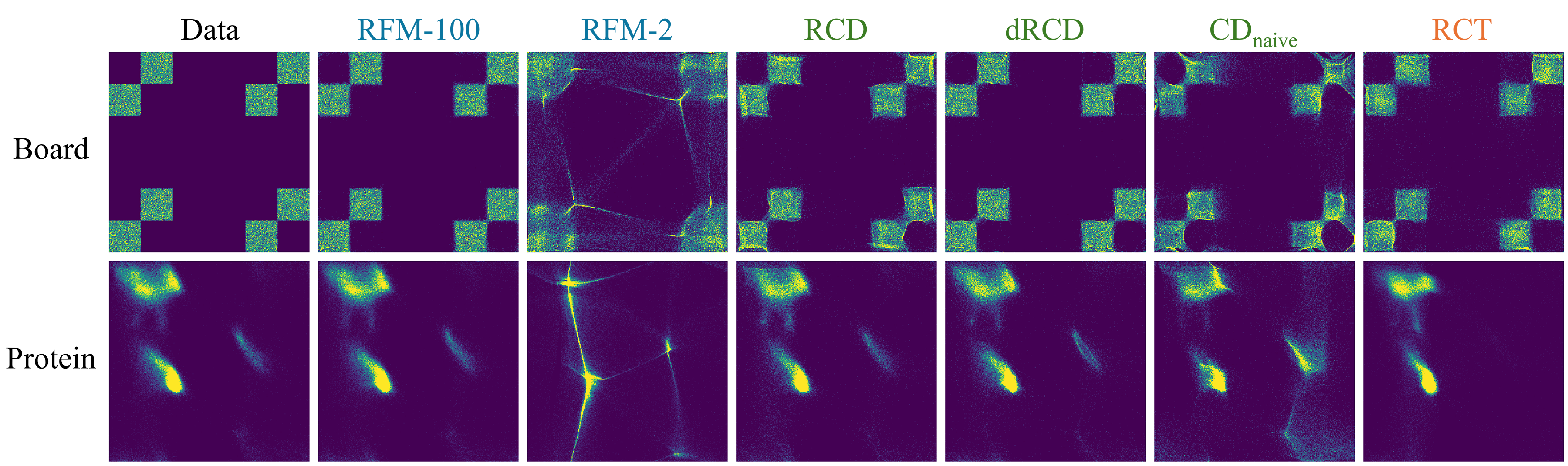}
    \vspace{-.5em}
    \caption{Generations on the 2D flat torus. The FM, CD, and CT models are colored in blue, green, and orange.}
    \label{fig:torus}
\end{figure}

For the 2D torus datasets, we provide visualizations of the generations as the heat maps in Figure~\ref{fig:torus}. Similar trends can be observed comparing different baselines. Interestingly, although the discrete-time RCD model achieved a better MMD score on the protein dataset, its visual results are not as good as RCD, as there are noticeable artifacts on the minor mode on the right in the dRCD generation. 

For the spherical manifold, we provide generated points on all four datasets across all models in Figure~\ref{fig:sphere}. The number of generated samples is exactly the same as the corresponding ground truth data for fair comparison.
It can be clearly seen that, while RFM-2 could still capture some coarse-grain features of the overall distribution, it failed to capture the finer-grain modes accurately. Similar phenomena can be observed on the CD$_\text{naive}$ baseline, where the Euclidean assumption does not necessarily capture the curved Riemannian geometry.

\begin{figure}[htb]
    \centering
    \includegraphics[width=\linewidth]{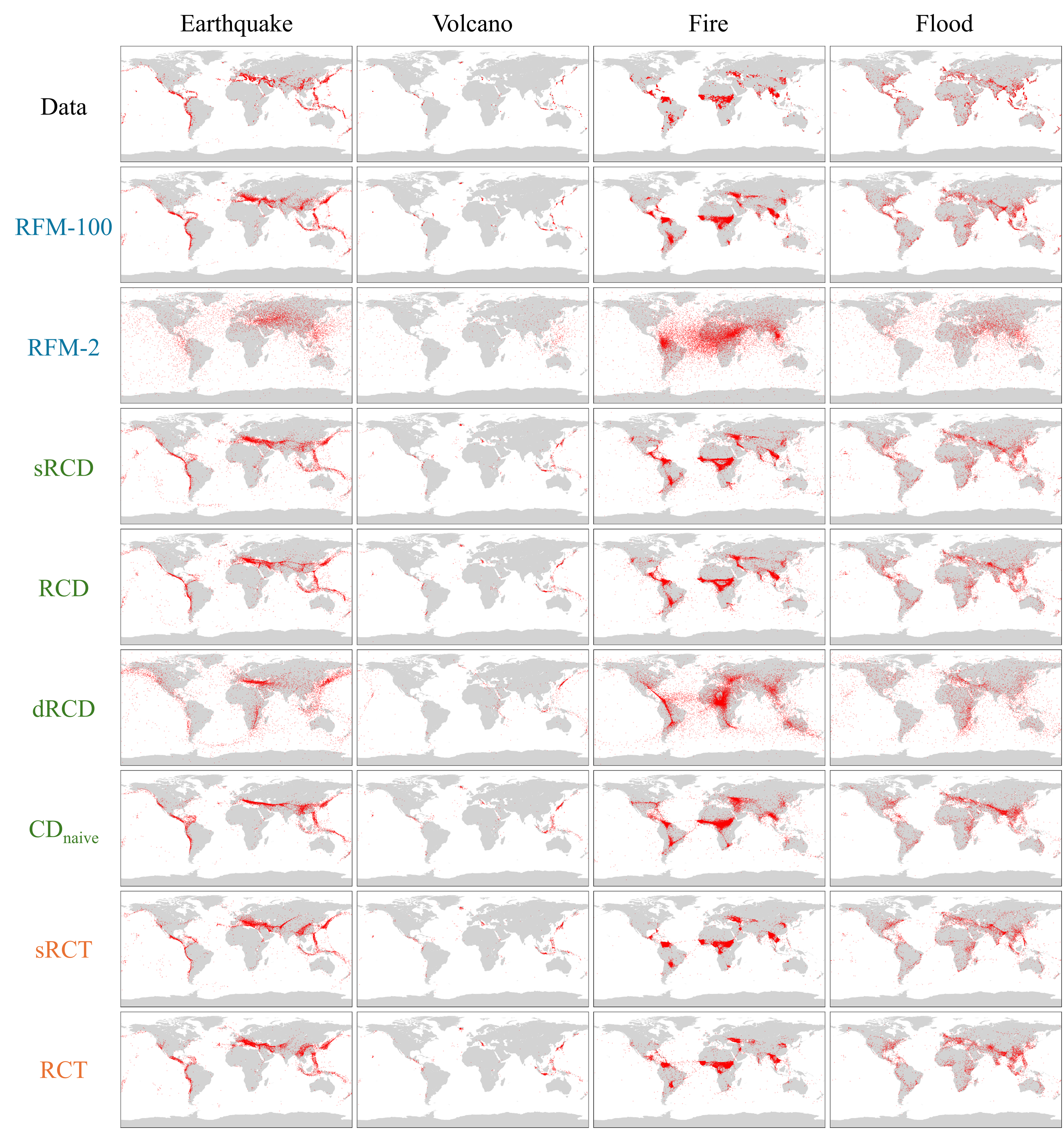}
    \vspace{-.5em}
    \caption{Generations on the four datasets on the 2-sphere. The FM, CD, and CT models are colored in blue, green, and orange.}
    \label{fig:sphere}
\end{figure}

\end{document}